\documentclass[11pt]{article}

\usepackage{latexsym,mathrsfs}
\usepackage{amsmath,amssymb} 
\usepackage{amsthm,enumerate,verbatim}
\usepackage{amsfonts}
\usepackage{mathabx}
\usepackage{graphicx}
\usepackage{url}
\usepackage{graphicx}
\usepackage{subcaption}
\usepackage{mathtools}
\usepackage[toc,page]{appendix}
\usepackage{multirow}
\usepackage{xcolor}
\usepackage{hyperref}
\usepackage{algorithm}
\usepackage{algorithmic}

\definecolor{brightpink}{rgb}{1.0, 0.0, 0.5}

\newcommand{\ngi}[1]{{{\color{black} #1}}}
\definecolor{darkgreen}{rgb}{0,.4,0}
\def\ji#1{{\noindent\color{black}{#1}}}
\def\jiter#1{{\noindent\color{darkgreen}{#1}}}
\newcommand{\revise}[1]{{{\color{red} #1}}}

\setlength{\textwidth}{170mm}
\setlength{\textheight}{220mm}
\setlength{\topmargin}{-5mm}
\setlength{\oddsidemargin}{-5mm}
\setlength{\evensidemargin}{-5mm}

\newtheorem{definition}{Definition}
\newtheorem*{definition*}{Definition}

\newtheorem{assumption}{Assumption}
\newtheorem{remark}{Remark}
\newtheorem{proposition}{Proposition}

\DeclareMathOperator{\logdet}{logdet} 
\DeclareMathOperator{\tr}{trace} 
\DeclareMathOperator{\diag}{Diag} 

\input alphabet.tex

\title{\LARGE \bf
Multiplicative Updates for NMF with $\beta$-Divergences 
under Disjoint Equality Constraints 
}


\author{Valentin Leplat$^*$ \and 
Nicolas Gillis\thanks{Department of Mathematics and Operational Research,
Facult\'e Polytechnique, Universit\'e de Mons,
Rue de Houdain 9, 7000 Mons, Belgium. Authors acknowledge the support by the Fonds de la Recherche Scientifique - FNRS and the Fonds Wetenschappelijk Onderzoek - Vlanderen (FWO) under EOS Project no O005318F-RG47, and by the European Research Council (ERC Starting Grant no 679515, ERC Consolidator Grant no 681839). 
E-mails: \{valentin.leplat, nicolas.gillis\}@umons.ac.be.
} \and  J\'er\^ome Idier\thanks{Laboratoire des Sciences du Num\'erique de Nantes (LS2N, CNRS UMR 6004), Ecole Centrale de Nantes, 44321 Nantes, France.
E-mail: jerome.idier@ls2n.fr}
}

\begin{document}

\maketitle

\begin{abstract} 
Nonnegative matrix factorization (NMF) is the problem of approximating an input nonnegative matrix, $V$, as the product of two smaller nonnegative matrices, $W$ and $H$. In this paper, we introduce a general framework to design multiplicative updates (MU) for NMF based on $\beta$-divergences ($\beta$-NMF) with disjoint equality constraints, and with penalty terms in the objective function. By disjoint, we mean that each variable appears in at most one equality constraint. Our MU satisfy the set of constraints after each update of the variables during the optimization process, while guaranteeing that the objective function decreases monotonically. We showcase this framework on three NMF models, and show that it competes favorably the state of the art: (1)~$\beta$-NMF with sum-to-one constraints on the columns of $H$, (2)~minimum-volume $\beta$-NMF with sum-to-one constraints on the columns of $W$, and (3)~sparse $\beta$-NMF with $\ell_2$-norm constraints on the columns of $W$. 
\end{abstract}
\textbf{Keywords}: 
nonnegative matrix factorization (NMF), 
$\beta$-divergences, 
disjoint constraints, 
simplex-structured NMF, 
minimum-volume NMF, 
sparsity 


\section{Introduction}
\label{sec_introduction}

Given a non-negative matrix $V \in \mathbb{R}_{+}^{F \times N}$ and a factorization rank $K \ll \min(F,N)$, nonnegative matrix factorization (NMF) aims to compute two non-negative matrices, $W$ with $K$ columns and $H$ with $K$ rows, such that $V\approx WH$~\cite{lee1999learning}. 
Over the last two decades, NMF has shown to be a powerful tool for the analysis of high-dimensional data. The main reason is that NMF automatically extracts sparse and meaningful features from a set of nonnegative data vectors. NMF has been successfully used in many applications such as image processing, text mining, hyperspectral imaging, blind source separation, single-channel audio source separation, clustering and music analysis; see~\cite{gillis2014, cichocki2009nonnegative, chi2012tensors, neymeyr2018set, fu2019nonnegative, gillis2020bk} and the references therein. 

To compute $W$ and $H$, the most standard approach is to solve the following optimization problem 
\begin{equation}\label{eq:2}
 \underset{W\in \mathbb{R}^{F \times K},H\in \mathbb{R}^{K \times N}}{\min}
 D\left(V|WH\right)  \quad  \text{ such that } \quad H \geq 0 \text{ and } W \geq 0, 
\end{equation}
where $D\left(V|WH\right)=\sum_{f,n} d(V_{fn}|[ \,WH] \,_{fn})$ with 
$d(x|y)$ a measure of distance between two scalars, 
and 
$A \geq 0$ means that the matrix $A$ is component-wise nonnegative. 
In this paper, we focus on $\beta$-NMF for which the measure of fit is the discrete $\beta$-divergence denoted $d_{\beta}(x|y)$ and defined as 
\[
d_{\beta}\left(x|y\right) = 
  \begin{cases}
    \frac{1}{\beta \left(\beta-1\right)}   \left(x^{\beta}+\left(\beta-1\right)y^{\beta}-\beta xy^{\beta-1}\right) & \text{for} \quad \beta \in \mathbb{R} \setminus \left\{0,1\right\}, \\
    x\log\frac{x}{y}-x+y           
      &   \text{for}\quad \beta=1,  \\
    \frac{x}{y}-\log\frac{x}{y}-1  
      &  \text{for}\quad \beta=0. 
  \end{cases}
\] 
For $\beta=2$, $d_2(x|y)=\frac12(x-y)^2$, so $D\left(V|WH\right)$ is the halved standard squared Euclidean distance between $V$ and $WH$, that is, 
the halved squared Frobenius norm $\frac{1}{2}\|V-WH\|_F^2$ . 
For $\beta=1$ and $\beta=0$, 
the $\beta$-divergence corresponds to the Kullback-Leibler (KL) divergence and the Itakura-Saito (IS) divergence, respectively. 
The error measure should be chosen accordingly with the distribution of the noise assumed on the data. 
The Frobenius norm assumes i.i.d.\@ Gaussian noise, KL divergence assumes a Poisson distribution, 
and the IS divergence assumes multiplicative gamma noise; see for example~\cite{fevotte2009nonnegative, dikmen2015learning, hong2020generalized} and the references therein. 
In the NMF literature, $\beta$-divergences are the most widely used objective functions. 


Most NMF algorithms developed to tackle~\eqref{eq:2} are based on iterative schemes that alternatively updates the factors $W$ and $H$. 
At each iteration, the minimization over one factor, $W$ or $H$, is performed with various optimization methods. 
For $\beta$-divergences, the most popular approach is to use multiplicative updates (MU) which were introduced for NMF in the seminal papers of Lee and Seung~\cite{lee1999learning, algoNMFlee}. 
In all applications we are aware of, $\beta$ is always chosen smaller than two. The reason is that, for $\beta > 2$,  $\beta$-divergences become more and more sensitive to outliers. Already for $\beta = 2$, it is well-known that the squared Frobenius norm is sensitive to outliers. However, the case $\beta = 2$ is particular because the subproblem in $W$ and $H$ are nonnegative least squares problems, that is, convex quadratic problems with Lipschitz continuous gradient. 
Therefore, highly efficient schemes exist when $\beta = 2$ that outperform the MU; for example exact block coordinate descent methods~\cite{Cichocki07HALS, gillis2012accelerated, kim2014algorithms, acc_NMF}, or fast gradient methods~\cite{guan2012nenmf, le2020inertial}. 
In this paper, we focus on the case $\beta < 2$.

In many applications, on top of the nonnegative constraints on the variables, additional constraints are needed to provide a meaningful solution. 
An instrumental example is the constraint that the entries in each column of $H$ sum to one; this is the so-called sum-to-one constraint that is crucial in blind hyperspectral unmixing; see Section~\ref{sec_ssbetaNMF}. 
Another example is a sum-to-one constraint on the columns of $W$ along with a volume regularizer on $W$. This model leads to identifiability of the factors $W$ and $H$ under mild conditions; see Section~\ref{sec_minvolKLNMF}. 
Most algorithms that deal with such equality constraints do it a posteriori with a projection onto the feasible set, or with a renormalization of the columns of $W$ and the rows of $H$ (that is, replace $W(:,k)$ and $H(k,:)$ by $\alpha_k W(:,k)$ and $H(k,:)/\alpha_k$ for some $\alpha_k > 0$),
so that their product $WH$ remains unchanged, and hence $D(V|WH)$ remains unchanged. 
Such approaches are not ideal: 
\begin{itemize}
    \item Projection requires to perform a line-search to ensure the monotonicity of the algorithm, that is, to ensure that the objective does not increase after each iteration, 
    which may be computationally heavy. 
    
    \item  Renormalization of the columns of $W$ and the rows of $H$ is only useful when each constraint applies to a column of $W$ {or} a row of $H$. 
    It is not applicable for example for the sum-to-one constraint on the columns of $H$ mentioned above. Moreover, in the presence of regularization terms in the objective function, it may destroy the monotonicity of the algorithm. 
    
\end{itemize} 
Another approach is to use parametrization. However, as far as we know, it does not guarantee the monotonicity of the algorithm; see Section~\ref{sec_ssbetaNMF} for more details.

\paragraph{Outline and contribution}

In this paper, we introduce a general framework to design MU for 
$\beta$-NMF with disjoint linear equality constraints, and with penalty terms in the objective function. By disjoint, we mean that each variable appears in at most one equality constraint. 
This framework, presented in Section~\ref{sec_generalframework}, does not resort to projection, renormalization, or parametrization.  
Our MU satisfy the set of constraints after each update of the variables during the optimization process, while guaranteeing that the objective function decreases monotonically. 
This framework works as follows:  
\begin{itemize}
    \item First, as for the standard MU for $\beta$-NMF, we majorize the objective function using a separable majorizer, that is, the majorizer is the sum of functions involving a single variable. 
    
    \item Second, we construct the augmented Lagrangian for the majorizer. Because the majorizer is separable, the problem can be decomposed into independent subproblems involving only variables that occur in the same equality constraint since they are  disjoint.   
    For a fixed value of the Lagrange multipliers, we prove that the solution of these subproblems are unique, under mild conditions (Proposition~\ref{prop:mingmu}). Moreover, it can be written in closed form via MU for specific values of $\beta$ and depending on the regularizer used (this is summarized in Table~\ref{tab:poldeg}). 
    
    \item Finally, we prove that, under mild conditions, there is a unique solution for the Lagrange multipliers so that the equality constraints are satisfied (Proposition~\ref{prop:mu_unicity}). This allows us to apply the Newton-Raphson method to compute the Lagrange multipliers while guaranteeing quadratic convergence (Proposition~\ref{NR}).

\end{itemize}

We then showcase this framework on two NMF models, and show that it competes favorably with the state of the art: 
\begin{enumerate}
    \item A $\beta$-NMF model with sum-to-one constraints on the columns of $H$, which we refer to as simplex-structured $\beta$-NMF (Section~\ref{sec_ssbetaNMF}), and 
    
    \item A minimum-volume $\beta$-NMF model with sum-to-one constraints on the columns of $W$ (Section~\ref{sec_minvolKLNMF}).  
\end{enumerate} 
Finally, Section~\ref{sec_nonlinear} shows that the framework can be extended to the case of quadratic disjoints constraints, which we showcase on 
sparse $\beta$-NMF with $\ell_2$-norm constraints on the columns of $W$.




\section{General framework to design MU for $\beta$-NMF under disjoint \ji{linear} equality constraints and penalization}
\label{sec_generalframework}
 



In this paper, we introduce a general framework to tackle $\beta$-NMF with disjoint linear equality constraints, and with penalty terms in the objective function. Let us first introduce specific notations: given a matrix $A \in \mathbb{R}^{F \times N}$ and a list of indices $\mathcal{K} \subseteq \{ (f,n) \ | \ 1 \leq f \leq F, 1 \leq n \leq N\}$, we denote by $A(\mathcal{K})$ the vector of dimension $|\mathcal{K}|$ whose entries are the entries of $A$ corresponding to the indices within $\mathcal{K}$. 
Let us introduce $\mathcal{K}_{i}$ ($1 \leq i \leq I$) and $\mathcal{B}_{j}$ ($1 \leq j \leq J$) to be disjoint sets of indices for the entries of $W$ and $H$, respectively, that is, 
\begin{itemize}

    \item $\mathcal{K}_{i} \subseteq \{ (f,k) \ | \ 1 \leq f \leq F, 1 \leq k \leq K\}$ for  $i = 1,2,\dots,I$, 
    
    \item $\mathcal{B}_{j} \subseteq \{ (k,n) \ | \ 1 \leq k \leq K, 1 \leq n \leq N\}$ for $j = 1,2,\dots,J$,

    \item $\mathcal{K}_{u} \cap \mathcal{K}_{v} = \emptyset$ for all $1 \leq u,v \leq I$ and $u \neq v$,
    
    \item $\mathcal{B}_{p} \cap \mathcal{B}_{q} = \emptyset$ for all $1 \leq p,q \leq J$ and $p \neq q$. 
\end{itemize} 
We now define penalized $\beta$-NMF with disjoint linear equality constraints as follows 
\begin{equation}\label{eq:betaNMFmodel}
\begin{aligned}
& \underset{W\in \mathbb{R}^{F \times K}_+,H\in \mathbb{R}^{K \times N}_+}{\min}
& & D_{\beta}\left(V|WH\right) + \lambda_1 \Phi_1(W) + \lambda_2 \Phi_2(H) \\
& \text{such that}
& & \alphab_i^{T} W \left( \mathcal{K}_{i} \right) = b_i \text{ for } 1\leq i \leq I, \\
& \text{}
& &  \gammab_j^{T} H \left( \mathcal{B}_{j} \right) = c_j \text{ for } 1\leq j \leq J,
\end{aligned}
\end{equation}
where 
\begin{itemize}
    \item the penalty functions $\Phi_1(W)$ and $\Phi_2(H)$ 
    are lower bounded  and admit a particular upper approximation;  see Assumption~\ref{ass:phi} below. 
    
    \item $\lambda_1$ and $\lambda_2$ are the penalty weights (nonnegative scalars). 
    
    \item $\alphab_i \in \mathbb{R}^{|\mathcal{K}_i|}_{++}$ ($1\leq i \leq I$) and $\gammab_j \in \mathbb{R}^{|\mathcal{B}_j|}_{++}$ ($1\leq j \leq J$) are vectors with positive entries. 
    Note that if $\alphab_i$ or $\gammab_j$ contains zero entries,  the corresponding indices can be removed from $\mathcal{K}_{i}$ and $\mathcal{B}_{j}$.  
    
    \item $b_i$ ($1\leq i \leq I$) and $c_j$  ($1\leq j \leq J$) are positive scalars. 
\end{itemize}

As for most NMF algorithms, we propose to resort to a  block coordinate descent (BCD) framework 
to solve problem~\eqref{eq:betaNMFmodel}: at each iteration we tackle two sub-problems separately; one in $W$ and the other in $H$. 
The subproblems in $W$ and $H$ are essentially the same, by symmetry of the model, since transposing the relation $X \approx WH$ gives $X^T \approx H^T W^T$. Hence, we may focus on solving the  subproblem in $H$ only, namely 
\begin{equation}
\label{eq:betaNMFmodelH}
 \underset{H\in \mathbb{R}^{K \times N}_+}{\min}
 D_{\beta}\left(V|WH\right) + \lambda_2 \Phi_2(H) 
\quad \text{ such that } \quad 
 \gammab_j^{T} H \left( \mathcal{B}_{j} \right)=c_j \text{ for } 1\leq j \leq J. 
\end{equation}


In order to solve~\eqref{eq:betaNMFmodelH}, we will design MU based on the majorization-minimization (MM) framework~\cite{sun2017majorization}, which is the standard in the NMF literature; see~\cite{Fevotte_betadiv} and the references therein. 
Let us briefly recall the high-level ideas to obtain MU via MM. 
Let us consider the general  problem 
\[
\min_{h \in \mathcal{H}} f(h). 
\] 
Given an initial iterate $\widetilde{h}\in \mathcal{H}$, MM generates a new iterate $\hat{h}\in \mathcal{H}$ that is guaranteed to decrease the objective function, that is,
$f\big( \hat{h} \big) \leq f\big( \widetilde{h}  \big)$. To do so, it uses the following two steps: 
\begin{itemize} 
\item Majorization: find a function that is an upper approximation of the objective and is tight at the current iterate, which is referred to as a majorizer. More precisely find a function $g\big(h|\widetilde{h}\big)$ such that 
\[
(i) ~ g\big(\widetilde{h}|\widetilde{h}\big) = f\big(\widetilde{h}\big) 
\quad \text{ and } \quad 
(ii)  ~  g\big({h}|\widetilde{h}\big) \geq f({h}) \text{ for all } h \in  \mathcal{H}. 
\]

\item Minimization: minimize the majorizer, that is, 
solve $\min_{h \in \mathcal{H}} g\big({h}|\widetilde{h}\big)$ approximately or exactly, to obtain the next iterate $\hat{h} \in \mathcal{H}$ which is such that $(iii) ~ g\big(\hat{h}|\widetilde{h}\big)  \leq g(\widetilde{h}|\widetilde{h})$. 
This guarantees the objective function to decrease at each step of this iterative process since 
\[
f\big(\hat{h}\big) 
\; 
\underset{(ii)}{\leq} 
\; 
g\big(\hat{h}|\widetilde{h}\big) 
\; 
\underset{(iii)}{\leq}  
\; 
g\big(\widetilde{h}|\widetilde{h}\big) 
\; 
\underset{(i)}{=} 
\; 
f\big(\widetilde{h}\big). 
\] 
\end{itemize}  
The MU for NMF are obtained using MM where the majorizer $g$ is chosen separable, that is, $g\big(h|\widetilde{h}\big) = \sum_{i=1} g_i\big(h_i|\widetilde{h}_i\big)$ for some well chosen univariate functions $g_i$'s; see~\eqref{eqn:auxbeta} in the next section. 
This choice typically makes the minimization of $g$ admits a closed-form solution which is multiplicative, that is, it has the form 
$\hat{h} = \widetilde{h} \odot c\big(\widetilde{h}\big)$ where $\odot$ is the component-wise product, and $c\big(\widetilde{h}\big)$ is a nonnegative vector that  depends on $\widetilde{h}$. We will encounter several examples later in this paper.


In summary, to derive MU for~\eqref{eq:betaNMFmodelH}, we will follow the MM framework. 
\color{black}  
We first provide a majorizer for the objective of~\eqref{eq:betaNMFmodelH} in Section~\ref{sec:majorizer}. This majorizer has the property to be separable in each entry of $H$. In order to handle the equality constraints, 
we introduce Lagrange dual variables in Section~\ref{sec:lagrange}, and explain how they can be computed efficiently. 
This allows us to derive general MU in Section~\ref{betanmf_disjointconstraint} in the case of non-penalized $\beta$-NMF under disjoint linear equality constraints.
 This is showcased on simplex-structured $\beta$-NMF in Section~\ref{sec_ssbetaNMF}. In Section~\ref{sec_minvolKLNMF}, we will illustrate on minimum-volume KL-NMF  how to derive MU in the presence of penalty terms.


\ngi{\subsection{Separable majorizer for the objective function}}
\label{sec:majorizer} 

Let us derive a majorizer for 
$\Psi(H) := D_{\beta}\left(V|WH\right) + \lambda \Phi(H)$, 
that is, a function $G\big(H|\widetilde{H}\big)$ satisfying 
(i) $G\big(H|\widetilde{H}\big)  \geq \Psi(H)$ for all $H$, and 
(ii)  $G\big(\widetilde{H}|\widetilde{H} \big)  = \Psi\big(\widetilde{H} \big)$. Note that, to simplify the presentation, we denote $\Phi_2(H) = \Phi(H)$ and $\lambda = \lambda_2$. 
To do so, let us analyze each term of $\Psi(H)$ independently. 

\paragraph{Majorizing $D_{\beta}\left(V|WH\right)$} The first term $D_{\beta}\left(V|WH\right)$ can be decoupled into $N$ independent terms, one for each column $\hb_n$ of $H$, that is, 
$D_{\beta}\left(V|WH\right) = \sum_{n=1}^N D_{\beta}\left(\vb_n|W\hb_n\right)$, 
where $\vb_n$ denotes the $n$th column of matrix $V$. 
Let us focus on a specific column of $H$, denoted $\hb \in \mathbb{R}^K_+$, and the corresponding column of $V$, denoted $\vb \in \mathbb{R}^K_+$.  
We majorize $D_{\beta}(\vb|W\hb)=\sum_{f=1}^F d_{\beta}(v_f|(W\hb)_f)$  following the methodology introduced in \cite{Fevotte_betadiv}, {which consists in applying a convex-concave procedure \cite{Yuille03} to $d_{\beta}$}, as presented in Appendix~\ref{convconcacm}. The resulting upper bound is given by
\begin{equation}
\label{eqn:auxbeta}
d_{\beta}(v_f|(W\hb)_f) \leq
\sum_{k=1}^K \frac{w_{fk}\widetilde{h}_k}{\widetilde{v}_f}\widecheck{d} \left( v_f |  \widetilde{v}_f \frac{h_k}{\widetilde{h}_k} \right)
+ \widehat{d}'\big(v_f|\widetilde{v}_f\big) \sum_{k=1}^K w_{fk} \big(h_k - \widetilde{h}_k\big)
+ \widehat{d}\big(v_f | \widetilde{v}_f\big), 
\end{equation}
where $w_{fk}$ denotes the entry of matrix $W$ at position $(f,k)$, 
$\widetilde{v}_f := \big(W \widetilde{\hb}\big)_f$ denotes the $f$th entry of $\widetilde{\vb}$, 
and 
$\widehat{d}$ and $\widecheck{d}$ are the concave and convex parts of $d$, respectively. 

\paragraph{Majorizing $\Phi(H)$}  For the second term $\Phi(H)$, we rely on the following assumption for $\Phi$. 
\begin{assumption} \label{ass:phi}
The function 
$\Phi : \mathbb{R}^{K \times N}_+ \mapsto \mathbb{R}$ 
{is lower bounded, and} 
for any $\widetilde{H} \in \mathbb{R}^{K \times N}_+$ there exists constants $L_{kn}$ ($1 \leq k \leq K, 1 \leq n \leq N$) such that  the inequality 
\begin{equation} \label{eq:Lipschtiz_case} 
\Phi(H) 
  \;  \leq \;    \Phi\big( \widetilde{H} \big) + \left\langle \nabla\Phi(\widetilde{H}), H - \widetilde{H} \right\rangle + \sum_{k,n} \frac{L_{kn}}{2} (H_{kn} - \widetilde{H}_{kn})^2 
\end{equation} 
  is satisfied for all $H \in \mathbb{R}^{K \times N}_+$.  (Note that the constants $L_{kn}$ may depend on $\widetilde{H}$, this will be the case for example in Section~\ref{sec_minvolKLNMF}\color{black}).   
\end{assumption}

Let us mention two important classes of functions satisfying Assumption~\ref{ass:phi}. 
\begin{enumerate}

\item Smooth concave functions that are lower bounded on the nonnegative orthant. For such functions, we can take $L_{kn} = 0$ for all $k,n$ since they are upper approximated by their first-order Taylor approximation.  
Note that, in this case, 
\begin{equation}
\label{eq:geq0}
\nabla\Phi(\widetilde{H}) \geq 0,
\end{equation}
otherwise we would have 
$\lim_{y\rightarrow\infty}\Phi\big(H + y \eb_i \eb_j^T \big)=-\infty$, where $\eb_i$ is the $i$th unit vector, and 
this would contradict the fact that $\Phi$ is bounded from below. 
This observation will be useful in  the proof of Proposition~\ref{prop:mingmu} and is only valid for the special case $L_{kn} = 0$ for all $k,n$.  

Examples of such penalty functions include the  sparsity-promoting regularizers $\Phi(H) = \| H \|_p^p =  \sum_{k,n} H(k,n)^p$ for $0 < p \leq 1$ since $H \geq 0$.

\item Lower-bounded functions with Lipschitz continuous gradient for which~\eqref{eq:Lipschtiz_case} follows from the descent lemma~\cite{Bertsekas99}. 

Examples of such penalty functions include any smooth convex functions; for example any quadratic penalty, such as  
$||A H - B ||_2^2$ for some matrices $A$ and $B$ in which case $L_{kn} = \sigma_1(A)^2$ for all $k,n$.  
We will encounter another example later in the paper, namely $\logdet\big(HH^\top + \delta I\big)$ for $\delta > 0$ which allows to minimize the volume of the rows of $H$; see Section~\ref{sec_minvolKLNMF} for the details  
(Note that we will use this regularizer for $W$). 

\end{enumerate}


\paragraph{Majorizing $\Psi(H)$} 

Combining \eqref{eqn:auxbeta} and \eqref{eq:Lipschtiz_case}, we can construct a majorizer for $\Psi(H)$. Since both \eqref{eqn:auxbeta} and \eqref{eq:Lipschtiz_case} are separable in each entry of $H$, 
their combination is also separable into a sum of $K\times N$ component-wise majorizers, up to an additive constant: 
\begin{equation}
G\big(H|\widetilde{H}\big) 
= \sum_{n=1}^N\sum_{k=1}^K g\big(h_{kn}|\widetilde{H}\big) + C\big(\widetilde{H}\big),
\end{equation}
where
\begin{align}
\label{eq:Gkn}
g\big(h_{kn}|\widetilde{H}\big) &= \sum_{f=1}^F \frac{w_{fk}\widetilde{h}_{kn}}{\widetilde{v}_{fn}}\widecheck{d} \left( v_{fn} |  \widetilde{v}_{fn} \frac{h_{kn}}{\widetilde{h}_{kn}} \right)
+ a_{kn} h_{kn}^2 + p_{kn} h_{kn},
\\
C\big(\widetilde{H}\big) &= \sum_{n=1}^N\sum_{f=1}^F\left(\widehat{d}\big(v_{fn} | \widetilde{v}_{fn}\big)
- \sum_{k=1}^K\widehat{d}'\big(v_{fn}|\widetilde{v}_{fn}\big) w_{fk} \widetilde{h}_{kn}\right) \ngi{+ a_{kn} \widetilde{h}_{kn}^2},
\notag
\end{align}
with $a_{kn}=\lambda\frac{L_{kn}}{2}$, and
\begin{equation*}
p_{kn}=\sum_{f=1}^F w_{fk}\widehat{d}'\big(v_{fn}|\widetilde{v}_{fn}\big) +\lambda\left(\frac{\partial \Phi}{\partial h_{kn}}\big(\widetilde{H}\big)- L_{kn} \widetilde{h}_{kn}\right).
\end{equation*} 

\subsection{Dealing with equality constraints via Lagrange dual variables}  \label{sec:lagrange} 

In the previous section, we derived a majorizer for $\Psi(H)$, $G\big(H|\widetilde{H}\big)$, which is separable in each entry of $H$. Without the equality constraints, we could then compute closed-form solutions to univariate problems to minimize $G\big(H|\tilde{H}\big)$ to obtain the standard MU for NMF as in~\cite{Fevotte_betadiv}. 

However, in problem~\eqref{eq:betaNMFmodelH}, the entries of $H$ in the subsets $\mathcal{B}_j$ are not independent as they are linked with the equality constraints $\gammab_j^{T} H ( \mathcal{B}_{j})=c_j$ for $j = 1,2,\dots,J$. 
In fact, to minimize the majorizer under the equality constraints, we need to solve 
\begin{equation}
\label{eq:betaNMFmodelHmajor}
\underset{H\in \mathbb{R}^{K \times N}_+}{\min}
 G\big(H|\widetilde{H}\big) 
\quad  \text{ such that } \quad 
\gammab_j^{T} H (\mathcal{B}_{j})=c_j \text{ for } 1\leq j \leq J. 
\end{equation} 
The variables in different sets $\mathcal{B}_j$ can be optimized independently, as they do not interact in the majorizer nor in the constraints. Note that, for the entries of $H$ that do not appear in any constraints, the standard MU~\cite{Fevotte_betadiv} can be used. 
For simplicity, let us fix $j$ and 
denote $\mathcal{B}=\mathcal{B}_j$, $Q = |\mathcal{B}|$, 
$\yb = H(\mathcal{B}) \in \mathbb{R}^{Q}_{+}$, $\gammab = \gammab_j \in \mathbb{R}^{Q}_{++}$, 
and $c = c_j > 0$.  
The problems we need to solve have the form 
\begin{equation}
\label{eq:minG}
\min_{\yb \in \Yc} G\big(\yb | \widetilde{H} \big), 
\end{equation} 
where $\Yc=\left\{\yb \in \mathbb{R}^{Q}_+ \ | \ \gammab^T\yb=c \right\}$ and
\begin{equation}
\label{eq:Gy}
G\big(\yb | \widetilde{H}\big)=\sum_{(k,n)\in \mathcal{B}}g\big(h_{kn} | \widetilde{H} \big),
\end{equation} 
where the component-wise majorizers $g\big(h_{kn}|\widetilde{H}\big)$ are defined by~\eqref{eq:Gkn}. 
Let us introduce a convenient notation: for ${q} = 1,2,\dots,Q$, we denote by $(k({q}), n({q}))$ the $q$th pair belonging to  $\mathcal{B}$. 
Hence the Lagrangian function of~\eqref{eq:Gy} can be written as 
%
%
{
\begin{equation}
\label{eq:Gmuy}
    G^\mu\big(\yb|\widetilde{H}\big)  
    = G\big(\yb|\widetilde{H}\big)-\mu(\gammab^T\yb-c) 
    = \mu c + C\big(\widetilde{H}\big) + \sum_{{q}=1}^Q  g^\mu\big(y_{q}|\widetilde{H}\big), 
\end{equation}
where
\begin{align}
g^\mu\big(y_{q}|\widetilde{H}\big) 
& = g\big(y_{q}|\widetilde{H}\big)-\mu \gamma_{q}\,y_{q}\notag \\ 
& = \sum_{f=1}^F \frac{w_{fk({q})}\widetilde{y}_{q}}{\widetilde{v}_{fn({q})}}\widecheck{d} \left( v_{fn({q})} |  \widetilde{v}_{fn({q})} \frac{y_{q}}{\widetilde{y}_{q}} \right)
+ a_q y_{q}^2 + (p_{q}-\mu \gamma_{q}) y_{q},
\label{eq:gmu}
\\
p_{q}&=\sum_{f=1}^F w_{fk({q})}\widehat{d}'\big(v_{fn({q})}|\widetilde{v}_{fn({q})}\big) +\lambda\left(\frac{\partial \Phi}{\partial y_{q}}\big(\widetilde{H}\big)- L_{k(q)n(q)} \widetilde{y}_{q}\right), 
\label{eq:pell}
\end{align}
}
and $\mu\in\mathbb{R}$.
Note that $G^\mu$ is separable, as is $G$, because the term $\gammab^T \yb$ is linear. 

Assume for now that the Lagrangian multiplier $\mu$ is known, and let us minimize $G^\mu\big(\yb|\widetilde{H}\big)$ on $(0,\infty)^Q$. 
Such a problem is separable under the form of $Q$ subproblems, consisting in minimizing univariate functions  $g^\mu\big(\cdot|\widetilde{H}\big)$ separately over $(0,\infty)$. We now show in Proposition~\ref{prop:mingmu} that, under mild conditions, each subproblem admits a unique solution over $(0,\infty)$. 
%
\begin{proposition}
\label{prop:mingmu}
Let ${q} \in \{1,2,\dots,Q\}$. 
Assume that $\beta<2$ and $\widetilde{y}_{q},v_{fn({q})},w_{fk({q})}>0$ for all $f$.
Moreover, when $\beta\leq1$, assume that $\mu < \frac{p_{q}}{\gamma_{q}}$ for all $q$ such that $a_q=0$. 
Then there exists a unique minimizer $y_{q}^\star(\mu)$ of $g^\mu\big(y_{q}|\widetilde{H}\big)$ in $(0,\infty)$.  
\end{proposition}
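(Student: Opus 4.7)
The plan is to establish existence and uniqueness of the minimizer of $g^\mu(\cdot | \widetilde{H})$ on $(0, \infty)$ via two separate arguments: strict convexity (which yields uniqueness) and a boundary analysis at $0^+$ and $+\infty$ (which ensures the infimum is attained inside the open interval).

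First, I would verify that $g^\mu(y_q | \widetilde{H})$ is strictly convex on $(0, \infty)$. By the convex-concave decomposition recalled in Appendix~\ref{convconcacm}, the convex part $\widecheck{d}(x | \cdot)$ has a strictly positive second derivative on $(0, \infty)$ whenever $x > 0$ and $\beta < 2$. Since $v_{fn(q)} > 0$ and $w_{fk(q)}, \widetilde{y}_q, \widetilde{v}_{fn(q)} > 0$ by hypothesis, each summand $\frac{w_{fk(q)}\widetilde{y}_q}{\widetilde{v}_{fn(q)}}\,\widecheck{d}\bigl(v_{fn(q)} | \widetilde{v}_{fn(q)} y_q / \widetilde{y}_q\bigr)$ is strictly convex in $y_q$, as strict convexity is preserved under positive scaling and affine reparametrization. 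Adding the convex quadratic $a_q y_q^2$ (with $a_q \geq 0$) and the affine term $(p_q - \mu \gamma_q) y_q$ preserves strict convexity of the full sum.

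Second, I would rule out the infimum being approached at the boundary by examining the derivative $(g^\mu)'(y_q)$. As $y_q \to 0^+$, $\widecheck{d}'(x | c y_q)$ diverges to $-\infty$ for any $c, x > 0$ and $\beta < 2$ (the convex part involves either a negative power of $y$ or $-\log y$), so $(g^\mu)'(y_q) \to -\infty$ near $0$; hence $g^\mu$ is strictly decreasing near the origin and the infimum is not attained there. As $y_q \to \infty$, two regimes arise. When $\beta \in (1, 2)$, the convex part grows at a super-linear polynomial rate $y_q^\beta$, dominating the affine term and forcing $g^\mu(y_q) \to +\infty$ irrespective of $\mu$. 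When $\beta \leq 1$, the convex part $\widecheck{d}(x | \cdot)$ saturates at $+\infty$ (its derivative tends to $0^-$), so coercivity must instead come either from the quadratic $a_q y_q^2$ with $a_q > 0$, or from a strictly positive asymptotic slope $p_q - \mu \gamma_q > 0$; the hypothesis that $\mu < p_q / \gamma_q$ when $\beta \leq 1$ and $a_q = 0$ ensures one of these conditions holds.

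Combining the two parts, $g^\mu(\cdot | \widetilde{H})$ is a strictly convex continuous function on $(0, \infty)$ whose infimum is attained in the open interval, and strict convexity then forces the minimizer $y_q^\star(\mu)$ to be unique. The main technical obstacle is handling the strict convexity and boundary behavior of $\widecheck{d}$ uniformly across the different $\beta$-regimes, which is why the coercivity argument naturally splits into the cases $\beta \in (1,2)$ versus $\beta \leq 1$, and why the extra condition on $\mu$ is needed precisely in the latter case (and only when $a_q = 0$).
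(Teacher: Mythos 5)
Your proposal is correct and follows essentially the same route as the paper's proof: strict convexity of $g^\mu$ (inherited from the convex part $\widecheck{d}$ of the decomposition in Appendix~\ref{convconcacm}) gives uniqueness, and the boundary analysis via $\lim_{y\to 0^+}\widecheck{d}'(x|y)=-\infty$ together with the case split at $\infty$ between $\beta\in(1,2)$ (or $a_q>0$) and $\beta\leq 1$ with $a_q=0$ matches the paper's argument, including the role of the hypothesis $\mu<p_q/\gamma_q$.
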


%
\begin{proof}
According to Proposition~\ref{prop:approxdbeta} (see Appendix~\ref{convconcacm}), each $g^\mu$ is $C^\infty$ and strictly convex on $(0,\infty)$, so its infimum is uniquely attained in the closure of $(0,\infty)$. We have to prove that it is neither reached at $0$ nor at $\infty$. 
On the one hand, \ji{from \eqref{eq:gmu}, we have 
\begin{equation}
\label{eqgmu'}
(g^\mu)'\big(y_{q}|\widetilde{H}\big)=\sum_{f=1}^F w_{fk({q})}\widecheck{d}' \left( v_{fn({q})} | \widetilde{v}_{fn({q})}\frac{y_{q}}{\widetilde{y}_{q}} \right) +2a_q y_{q} + p_{q} -\gamma_{q}\mu
\end{equation}
}%
and, for any $\beta<2$ and any $x>0$,
\begin{equation*}
\lim_{y\rightarrow0^+}\widecheck{d}'(x|y)=-\infty,\\
\end{equation*}
so $\lim_{y_{q}\rightarrow0^+}(g^\mu)'\big(y_{q}|\widetilde{H}\big)=-\infty$, which ensures that the infimum is not reached at $0$.
On the other hand,
\begin{equation}
\lim_{y\rightarrow\infty}\widecheck{d}'(x|y)=\begin{cases}0&\text{if } \beta\leq1,\\\infty&\text{otherwise.}\end{cases}
\label{eq:d'inty}
\end{equation}
According to \eqref{eqgmu'} and \eqref{eq:d'inty}, the distinction must be made between two cases:
\begin{itemize}
\item If $a_q>0$ or $\beta\in(1,2)$: $\lim_{y_{q}\rightarrow\infty}(g^\mu)'\big(y_{q}|\widetilde{H}\big)=\infty$, so the infimum is reached for a finite $y_{q}$. 
\item If $a_q=0$ and $\beta\leq1$: $\lim_{y_{q}\rightarrow\infty}(g^\mu)'\big(y_{q}|\widetilde{H}\big)=p_{q} -\gamma_{q}\mu$, so the same conclusion holds if $\mu<\frac{p_{q}}{\gamma_{q}}$. 
\end{itemize}
\end{proof}
We just proved that, under mild conditions, each $g^\mu$ has a unique minimizer over $(0,\infty)$.
However we assumed that the value of $\mu$ is fixed.  
Now given $\yb^\star(\mu)=\big[y_1^\star(\mu),\ldots,y_Q^\star(\mu)\big]^T$, let us show 
that the solution to $\gammab^T \yb^\star(\mu) =c$ is unique. The corresponding value of $\mu$, which we denote  $\mu^\star$, provides  the minimizer $\yb^\star(\mu^\star)$ of $G^\mu(\yb|\widetilde{H})$ that satisfies the linear constraint $\gammab^T\yb^\star(\mu^\star)=c$. \ji{Moreover, $\mu^\star$ naturally fulfills $\mu^\star<\frac{p_{q}}{\gamma_{q}}$ for all $q$ when $\beta\leq1$ and $a_q=0$, as required in Proposition~\ref{prop:mingmu}.}
%
\begin{proposition}
\label{prop:mu_unicity} 
Assume that $\beta<2$ and $\widetilde{y}_{q}, v_{fn({q})},w_{fk({q})}>0$ for all ${q},f$. 
Then the scalar equation $\gammab^T \yb^\star(\mu) =c$ in the variable $\mu$ admits a unique solution $\mu^\star$ in $(-\infty,t)$, where
\begin{align}
\label{eq:t}
t&=\min_{1 \leq q \leq Q} t_q, \text{ where }
t_q=\begin{cases} \frac{p_{q}}{\gamma_{q}}&\text{if $\beta\leq1$ and $a_q=0$,}\\\infty&\text{otherwise, }\end{cases}
\end{align} 
{so that $\yb^\star(\mu^\star)$ $\in(0,\infty)^Q$} is the unique solution to problem~\eqref{eq:minG}.
\end{proposition}
\begin{proof}
Under the conditions of Proposition~\ref{prop:mingmu}, $g^\mu\big(y_{q}|\widetilde{H}\big)$ has a unique minimizer $y^\star_{q}(\mu)$ for each $j$.  
By the first-order optimality condition, $y^\star_{q}(\mu)$ is a solution of $(g^\mu)'\big(y_{q}|\widetilde{H}\big)=0$ or equivalently, by \eqref{eqgmu'}, a solution of $\gamma_{q}^{-1}g'\big(y_{q}|\widetilde{H}\big)=\mu$ over $(0,\infty)$ where 
\begin{equation}
\label{eq:g'}
\gamma_{q}^{-1}g'\big(y_{q}|\widetilde{H}\big)=\gamma_{q}^{-1}\sum_{f=1}^F w_{fk({q})}\widecheck{d}' \left( v_{fn({q})} | \widetilde{v}_{fn({q})} \frac{y_{q}}{\widetilde{y}_{q}} \right) +2 \frac{a_q}{\gamma_{q}}y_{q} + \frac{p_{q}}{\gamma_{q}}
\end{equation}
is strictly increasing on $(0,\infty)$ (since $g$ is strictly convex) and one-to-one from $(0,\infty)$ to an open interval
$T_{q}=(t^-_{q},t^+_{q})$ where
\begin{align}
t^-_{q}&=\lim_{y_{q}\rightarrow0}g'\big(y_{q}|\widetilde{H}\big)=-\infty,\\
t^+_{q}&=\lim_{y_{q}\rightarrow\infty}g'\big(y_{q}|\widetilde{H}\big)
 = t_q. 
\end{align}
Moreover, $p_{q}\geq0$ if $a_q=0$ (then $L=0$) and $\beta\leq1$ according to~\eqref{eq:geq0} and \eqref{eq:pell}.
As a consequence, $\gamma_{q}^{-1}g'(y_{q}^\star|\widetilde{y}_{q})=\mu$ is equivalent to
\begin{equation}
\label{eq:yhat}
y_{q}^\star(\mu) = \big(g'\big)^{-1}(\gamma_{q}\mu),
\end{equation}
where $\mu\in T_{q}$ and $\big(g'\big)^{-1}$ denotes the inverse function of $g'$.

Coming back to the multivariate problem~\eqref{eq:minG}, we must find a value $\mu^\star$ of the Lagrangian multiplier such that the constraint $\gammab^T\yb^\star(\mu)=c$ is satisfied. Given~\eqref{eq:yhat}, $\mu^\star$ is a solution of
\begin{equation}
\label{eq:muhat}
\sum_{{q}=1}^Q \gamma_{q}\big(g'\big)^{-1}(\gamma_{q}\mu)=c.
\end{equation}
Each $g'\big(y_{q}|\widetilde{H}\big)$ being strictly increasing on $(0,\infty)$, $(g')^{-1}(\gamma_{q}\mu)$ is also strictly increasing (from $T_{q}$ to $(0,\infty)$), this is a direct consequence of $(f^{-1})'=\frac1{f' \circ f^{-1}}$ where $f$ is any strictly increasing function on some interval. Finally $\sum_{{q}=1}^Q  \gamma_{q}(g')^{-1}(\gamma_{q}\mu)$ is strictly increasing from $\cap_{j=1}^J T_{q}=(-\infty,t)$ to $(0,\infty)$, with $t\geq0$. Therefore, the solution $\mu^\star$ is unique. 
\end{proof}

 Proposition~\ref{prop:mu_unicity} shows that the optimal Lagrangian multiplier is the unique solution of~\eqref{eq:muhat}. Finding the solution of~\eqref{eq:muhat} is equivalent to finding the root of a function $r(\mu)$. We propose here-under to use a Newton-Raphson method to compute $\mu^\star$, and show that this method generates a sequence of iterates $\mu_n$ that converges towards $\mu^\star$ at a quadratic speed.

\begin{proposition}
\label{NR} 
Assume that $\beta<2$ and $\widetilde{y}_{q},v_{fn({q})},w_{fk({q})}>0$ for all ${q},f$. 
Let
$$
r(\mu)=\sum_{{q}=1}^Q  \gamma_{q}(g')^{-1}\big(\gamma_{q}\mu\big)-c
$$
for $\mu\in(-\infty,t)$ where $t$ is defined in~\eqref{eq:t}, and denote $\mu^\star$ the unique solution of $r(\mu)=0$. 
From any initial point $\mu_0\in(\mu^\star,t)$, Newton-Raphson's iterates
$$
\mu_{n+1}=\mu_n-\frac{r(\mu_n)}{r'(\mu_n)}
$$
decrease towards $\mu^\star$ at a quadratic speed.
\end{proposition}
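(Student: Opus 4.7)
The plan is to establish three properties of $r$ on $(-\infty, t)$---smoothness, strict monotonicity, and convexity---and then invoke the classical monotone convergence of Newton's method for an increasing convex function. Smoothness follows from the inverse function theorem: as shown in the proof of Proposition~\ref{prop:mu_unicity}, $g'$ is a $C^\infty$ strictly increasing bijection from $(0,\infty)$ onto $T_q$, so $(g')^{-1}\in C^\infty(T_q)$ and $r\in C^\infty(-\infty,t)$. Strict monotonicity $r'>0$ is also contained in that proof since $((g')^{-1})'(\gamma_q\mu) = 1/g''\bigl((g')^{-1}(\gamma_q\mu)\bigr) > 0$.

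The technical heart is the convexity of $r$. By the inverse function second-derivative formula $(f^{-1})''(y) = -f''(f^{-1}(y))/[f'(f^{-1}(y))]^3$ applied to $f=g'$, combined with $g''>0$, convexity of $(g')^{-1}$ on $T_q$ is equivalent to concavity of $g'$ on $(0,\infty)$. From~\eqref{eq:g'}, $g'(y_q|\widetilde{H}) = \sum_f w_{fk(q)}\,\widecheck{d}'(v_{fn(q)}|\widetilde{v}_{fn(q)}\, y_q/\widetilde{y}_q) + 2a_q y_q + p_q$, so it suffices to show that $y\mapsto\widecheck{d}'(x|y)$ is concave on $(0,\infty)$ whenever $\beta<2$. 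I would verify this by case analysis using the explicit convex/concave splitting of $d_\beta$ recalled in Appendix~\ref{convconcacm}: for $\beta\in[1,2)$, $\widecheck{d}=d_\beta$ and direct differentiation yields $(\widecheck{d}')''(x|y) = (\beta-2)\,y^{\beta-4}\bigl[(\beta-1)y-(\beta-3)x\bigr] \leq 0$; for $\beta<1$ (including the IS case $\beta=0$), the convex part of $d_\beta$ reduces---up to constants---to a single term proportional to $x\, y^{\beta-1}$ (resp.\ $x/y$), whose $y$-derivative $-x\, y^{\beta-2}$ is manifestly concave on $(0,\infty)$. Nonnegative combinations plus a linear term preserve concavity, so $g'$ is concave, $(g')^{-1}$ is convex, and $r$ is a nonnegative combination of convex functions of $\mu$, hence convex.

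Once $r$ is $C^2$, strictly increasing and convex on $(-\infty,t)$, convergence from any $\mu_0\in(\mu^\star,t)$ is a classical tangent-line argument. For $\mu_n>\mu^\star$, $r(\mu_n)>0$ and $r'(\mu_n)>0$ force $\mu_{n+1}<\mu_n$; convexity places the tangent to $r$ at $\mu_n$ below the graph, so $r(\mu_{n+1})\geq 0$, and strict monotonicity yields $\mu_{n+1}\geq\mu^\star$. Iterating gives a decreasing sequence bounded below by $\mu^\star$, whose limit must be a zero of $r$ and hence equals $\mu^\star$ by Proposition~\ref{prop:mu_unicity}. Quadratic convergence then follows from the standard Taylor identity $\mu_{n+1}-\mu^\star = \frac{r''(\xi_n)}{2r'(\mu_n)}(\mu_n-\mu^\star)^2$ for some $\xi_n\in(\mu^\star,\mu_n)$, combined with $r'(\mu^\star)>0$ and continuity of $r',r''$ on $(-\infty,t)$.

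The main obstacle I expect is the convexity step, which rests on the pointwise concavity of $\widecheck{d}'(x|\cdot)$ and thus depends on the explicit convex/concave splitting of $d_\beta$ in each $\beta$-regime. Once that verification is in hand, the remainder of the proof is a textbook monotone Newton--Raphson argument.
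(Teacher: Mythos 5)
Your proof is correct and follows essentially the same route as the paper's: both establish that $r$ is strictly increasing and strictly convex on $(-\infty,t)$ and then run the standard monotone Newton--Raphson argument, with the Taylor remainder (equivalently, the boundedness of $|r'|$ and $|r''|$ away from zero on $[\mu^\star,\mu_0]$) giving quadratic speed. The only divergence is in the convexity step, where the paper simply invokes the complete monotonicity of $\widecheck{d}''$ from item~3 of Proposition~\ref{prop:approxdbeta} to conclude that $\widecheck{d}'(x|\cdot)$ is concave, whereas you re-derive that concavity by direct case-by-case differentiation of the decomposition in Table~\ref{tab:decompbetadiv} --- a harmless redundancy (note only that $\widecheck{d}\neq d_\beta$ for $\beta\in[1,2)$: they differ by a term affine in $y$, so your computation is unaffected).
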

\begin{proof}
We already know that $r$ is strictly increasing from $(-\infty,t)$ to $(0,\infty)$. Let us show that $r$ is also strictly convex.
According to the third item of Proposition~\ref{prop:approxdbeta} in Appendix~\ref{convconcacm}, $\widecheck{d}''(x|y)$ is completely monotonic, so it is strictly decreasing in $y$. Equivalently, $\widecheck{d}'(x|y)$ is strictly concave in $y$, and each $g'$ is also strictly concave according to~\eqref{eq:g'}. Since the inverse of a strictly increasing, strictly concave function $f$ is strictly increasing and strictly convex, which is a direct consequence of $(f^{-1})''=-\frac{f''\circ f^{-1}}{(f' \circ f^{-1})^3}$, 
then each $(g')^{-1}$ is strictly convex, and finally, $r$ is strictly convex.

For any $\mu_0\in(\mu^\star,t)$, we have $r(\mu_0)>0$, so $\mu_1=\mu_0-\frac{r(\mu_0)}{r'(\mu_0)}<\mu_0$. We have also $\mu_1>\mu^\star$ as a consequence of the strict convexity of $r$. By immediate recurrence, we obtain that $\mu_n$ is a decreasing series that converges towards $\mu^\star$. According to \cite{Ortega70}, it converges at a quadratic speed since $|r'|$ and $|r''|$ are bounded away from 0 in $[\mu^\star,\mu_0]$.
\end{proof}

\paragraph{Discussion} 

At this point, we have derived an optimization framework to tackle problem~\eqref{eq:minG}. 
The optimal Lagrangian multiplier value is determined before each majorization-minimization update using a Newton-Raphson algorithm. However, such a formal solution is implementable if and only if each $y_{q}^\star(\mu)$ can be actually computed as the minimizer of $g^\mu\big(y_{q}|\widetilde{H}\big)$ in $(0,\infty)$. In some cases, computing $y_{q}^\star(\mu)$ is equivalent to extracting the roots of a polynomial of a degree smaller or equal to four, which is possible in closed form. In other cases, we have to solve a polynomial equation of degree larger than four, or even an equation that is not polynomial. Table~\ref{tab:poldeg} indicates the cases where a closed-form solution is available, and hence when our framework can be efficiently implemented. 
\begin{table}[h]
\centering
\setlength{\tabcolsep}{3pt}
\begin{tabular}{|c|c|c|c|c|c|c|c|}
\hline
\hfill     
&$\beta \in (-\infty,1)\setminus\{0\}$ 
& $\beta = 0$ 
& $\beta = 1$ & 
\multicolumn{4}{|c|}{$\beta \in (1,2)$}      \\
 &    &   &   &$\frac54$&$\frac43$&$\frac32$&other           \\\hline
\begin{tabular}{c}
\revise{No penalization, or} \\ 
\revise{{$L_{kn} = 0$ for all $k,n$}} 
\end{tabular} 
&1&1&1&3&4&2&$\rip$\\\hline
{$L_{kn} > 0$ for some $k,n$}  &$\rip$&3&2&$\rip$&$\rip$&3&$\rip$\\\hline
\end{tabular}
\caption{
Cases where \eqref{eq:yhat} can be computed in closed form. They are indicated by the degree of the corresponding polynomial equation, otherwise the symbol $\rip$ is used. The constants $L_{kn}$ are the one needed in Assumption~\ref{ass:phi} for the penalization functions $\Phi_1(H)$ and $\Phi_2(W)$; see~\eqref{eq:Lipschtiz_case}. }
\label{tab:poldeg}
\end{table}
\revise{We observe that, without penalization or with penalization satisfying $L_{kn} = 0$ for all $k,n$ (e.g., smooth concave functions), 
the polynomial equation is of degree one, and hence always admit a closed form for $\beta \leq 1$ 
and $\beta \in \left\{\frac{5}{4}, \frac{4}{3}, \frac{3}{2} \right\}$. 
} 
This particular case is discussed in the next section, which we will exemplify in Section~\ref{sec_ssbetaNMF} with $\beta$-NMF with sum-to-one constraints on the columns of $H$.  
In Section~\ref{sec_minvolKLNMF}, we will present an important example with $L_{kn} > 0$ for all $k,n$ and $\beta = 1$, namely minimum-volume KL-NMF.

\subsection{MU for $\beta$-NMF with disjoint linear equality constraints without penalization}\label{betanmf_disjointconstraint}

In this section, we derive an algorithm based on the general framework presented in the previous section to tackle the $\beta$-NMF problem under disjoint linear equality constraints without penalization, that is,  problem~\eqref{eq:betaNMFmodel} with $\lambda_1=\lambda_2=0$.
We consider this simplified case here as it allows to provide explicit MU for any value of $\beta < 2$; see the row `No penalization' of Table~\ref{tab:poldeg}. These updates  satisfy the constraints after each update of $W$ or $H$, and monotonically decrease the objective function $D_{\beta}\left(V|WH\right)$. 

Let us then consider the subproblem of \eqref{eq:betaNMFmodel} over $H$ when $W$ is fixed and with $\lambda_2=0$, that is,  
\begin{equation}\label{eq:subprobleminW}
 \underset{H\in \mathbb{R}^{K \times N}_+}{\min}
 D_{\beta}\left(V|WH\right) 
\quad \text{such that} \quad \gammab_j^{T} H (\mathcal{B}_{j})=c_j \text{ for } 1\leq j \leq J.
\end{equation}
Let us follow the framework presented above. First, an auxiliary function, which we denote $G(H| \widetilde{H})$, is constructed at the current iterate $\widetilde{H}$ so that it majorizes the objective for all $H$ and is defined as follows:
\begin{equation}
\label{eq:G}
\begin{aligned}
G\big(H|\widetilde{H}\big) 
& = \sum_{f,n}\left[\sum_{k} \frac{w_{fk}\widetilde{h}_{kn}}{\widetilde{v}_{fn}}\widecheck{d}\left(v_{fn} \Big|\widetilde{v}_{fn}\frac{h_{kn}}{\widetilde{h}_{kn}}\right)\right] +\left[\widehat{d}'\big(v_{fn}|\widetilde{v}_{fn}\big)\sum_{k}w_{fk}\big(h_{kn}-\widetilde{h}_{kn}\big)+\widehat{d}\big(v_{fn}|\widetilde{v}_{fn}\big) \right], 
\end{aligned}
\end{equation} 
where $\widecheck{d}(.|.)$ and $\widehat{d}(.|.)$ are  given in Appendix~\ref{convconcacm}.
Second, we need to minimize $G\big(H|\widetilde{H}\big)$ 
while imposing the set of linear constraints $\gammab_j^{T} H (\mathcal{B}_{j})=c_j$. The Lagrangian function of $G$ is given by 
\begin{equation}\label{eq:Gmu}
\begin{aligned}
G^{\mu}(H|\widetilde{H})= G(H |\widetilde{H} )-\sum_{j}^{J} \left[\mu_{j} \left( \gammab_j^{T} H (\mathcal{B}_{j})-c_j\right) \right], 
\end{aligned}
\end{equation} 
 where $\mu_{j}$ are the Lagrange multipliers associated to each linear constraint $\gammab_j^{T} H (\mathcal{B}_{j})=c_j$. We observe that $G^{\mu}$ in~\eqref{eq:Gmu} is a separable majorizer in the variables $H$ of the Lagrangian function $D_{\beta}\left(V|WH\right) - \sum_{j}^{J} \left[ \mu_{j} \left( \gammab_j^{T} H (\mathcal{B}_{j})-c_j\right) \right]$. Due to the disjoitness of each subset of variables $\mathcal{B}_{j}$ \eqref{eq:Gmu}, we only consider the optimization over one specific subset $\mathcal{B}_{j}$. 
The minimizer \eqref{eq:yhat} of $G^{\mu}(H\left( \mathcal{B}_{j} \right)|\widetilde{H}\left( \mathcal{B}_{j} \right))$ has the following component-wise expression:
\begin{equation}\label{eq:updateW}
\begin{aligned}
H^\star\left(\mathcal{B}_{j}\right)=\widetilde{H}\left(\mathcal{B}_{j}\right) \odot \left( \frac{\left[C\left(\mathcal{B}_{j}\right) \right]}{\left[ D\left(\mathcal{B}_{j}\right) - \mu_{j}  \gammab_j \right]} \right)^{. \eta \left( \beta \right)} , 
\end{aligned}
\end{equation}  
where $C=W^{T} \left( \left( WH \right)^{.\left( \beta-2 \right) } \odot V \right)$, $D=W^{T}\left( WH\right)^{.\left( \beta-1 \right) }$, $\eta(\beta)=\frac{1}{2-\beta}$ for $\beta \leq 1$, 
and $\eta(\beta)=\frac{1}{\beta-1}$ for $\beta \geq 2$ \cite[Table 2]{Fevotte_betadiv}, $A \odot B$ 
(resp.\@ $\frac{\left[ A \right]}{\left[ B \right]}$) is the Hadamard product (resp.\@ division) between $A$ and $B$,  
$A^{.\alpha}$ is the element-wise $\alpha$ exponent of $A$. 
\revise{
The case $\beta \in (1,2)$ is more difficult: 
we need to find a root of a function of the form 
$\mu + b x^{\beta-1} - c x^{\beta-2} = 0$. 
For example, for $\beta = \frac{3}{2}$, we have  
$\mu  + b x^{1/2} - c x^{-1/2} = 0$. 
Using  $y=\sqrt{x}$, and after simplifications, 
we obtain 
$\mu y + b y^{2}  - c = 0$ leading to the positive root 
$x = \left(\frac{\sqrt{\mu^2+4bc} -\mu}{2b}\right)^2$.  
} 

{According to Proposition~\ref{prop:mu_unicity}, \eqref{eq:updateW} is a well-defined update from $(0,\infty)^Q$ to itself, provided that $\mu_j$ is tuned to $\mu_j^\star$. This brings us a structural guarantee that $D\left(\mathcal{B}_j\right) - \mu_j^\star  \gammab_j$ cannot cancel.}

Finally, we need to \revise{evaluate $\mu_j^\star$, which is uniquely determined on some interval $\left(-\infty, t \right)$ according to Proposition~\ref{prop:mu_unicity}. This amounts to solve $\gammab_j^{T} H^\star \left( \mathcal{B}_{j} \right)=c_j$. When $\beta\not\in(1,2)$, this is equivalent} to find the root of the function 
\begin{equation} \label{eq:fmulincstr} 
r_j(\mu_j)=\sum_{{q}=1}^{Q} \gamma_{j,{q}} \left[ \widetilde{H}\left(\mathcal{B}_{j}\right) \odot \left( \frac{\left[C\left(\mathcal{B}_{j}\right) \right]}{\left[ D\left(\mathcal{B}_{j}\right) - \mu_{j}  \gammab_j \right]} \right)^{. \eta \left( \beta \right)} \right]_{{q}}  -c_j , 
\end{equation}
where $\left[A\right]_{{q}}$ denotes the ${q}$-th entry of expression $A$.  
Indeed, $r_j(\mu_j)$ is a finite sum of elementary rational functions of $\mu_j$ and each of them is an increasing, convex function in $\mu_{j}$ over $\left(-\infty, t_{{q}}\right)$ with  $t_{{q}}=\frac{D_{{q}}\left(\mathcal{B}_{j}\right)}{\gamma_{j,{q}}}$ for each $q$. It is even completely monotone for all $\mu$ in $\left(-\infty, t_{{q}}\right)$
because $\eta\left( \beta \right)>0$~\cite{miller2001}. As a consequence $r_j(\mu_j)$ is also a completely monotone, convex increasing function of $\mu_{j}$ in $\left(-\infty, t \right)$, where $t=\text{min}\left( t_{{q}} \right)$. Finally, we can easily show that the function $r_j(\mu_j)$ changes of sign on the interval $\left(-\infty, t \right)$ by computing two limits at the closure of the interval. As $\mu^\star \in (-\infty,t)$, the update \eqref{eq:updateW} is nonnegative.
To evaluate $\mu^\star$, we use a Newton-Raphson method, with any initial point $\mu_0\in(\mu^\star,t)$, with a quadratic rate of convergence as demonstrated in Proposition~\ref{NR}. 
Algorithm~\ref{consDisBetaNMF} summarizes our method to tackle~\eqref{eq:betaNMFmodel} for all the $\beta$-divergences, \jiter{$\beta\not\in(1,2)$} which we refer to as disjoint-constrained $\beta$-NMF algorithm. The update for matrix $W$ can be derived in the same way, by symmetry of the problem. \revise{For $\beta\in(1,2)$, a case-by-case analysis could be carried out for the values of $\beta$ for which the minimizer of \eqref{eq:Gmu} takes a closed-form expression.}


 \begin{remark} 
 As noted above, the denominators of \eqref{eq:updateW} and \eqref{eq:fmulincstr} will be different from zero. This follows notably from our assumption 
 that $(W,H)>0$; see Propositions~\ref{prop:mingmu},~\ref{prop:mu_unicity} and~\ref{NR}. 
 This is a standard assumption in the NMF literature: the entries of $(W,H)$ are initialized with positive entries which ensures all iterates to remain positive. This is important because the MU cannot change an entry equal to zero~\cite{lin2007projected}; 
 this is the so-called zero-locking phenomenon.   This implies $C$ and $D$ in~\eqref{eq:updateW} and~\eqref{eq:fmulincstr} are positive matrices (as long as $V$ has at least one nonzero entry per row and column). 
 In practice, one should however be careful because some entries of $W$ and $H$ can numerically be set to zero (because of finite precision). Hence, in our implementation, we use the machine precision as a lower bound for the entries of $W$ and $H$, as recommended in~\cite{gillis2012accelerated}. 
 \end{remark}
 \color{black}

\paragraph{Computational cost}  
The computational cost of Algorithm~\ref{consDisBetaNMF} is asymptotically equivalent to the standard MU for $\beta$-NMF, that is, it requires $\mathcal{O}\left(FNK \right)$ operations per iteration. Indeed, the complexity is mainly driven by matrix products required to compute $C$ and $D$; see~\eqref{eq:updateW}.  
To compute the roots of~\eqref{eq:fmulincstr} corresponding to $H$ using Newton-Raphson, each  iteration requires to compute  $r_j(\mu_j)/r_j'(\mu_j)$ for all $j$ which requires $\mathcal{O}(KN)$ operations (when every entry of $H$ appears in a constraint).  
Finding the roots therefore requires $\mathcal{O}(K N)$ operations times the number of Newton-Raphson iterations. By symmetry, it requires $\mathcal{O}(K F)$ operations to compute the roots corresponding to $W$. 
Because of the quadratic convergence, the number of iterations required for the convergence of the Newton-Raphson method is typically small, namely between 10 to 100 in our experiments using the stopping criterion 
$|r (\mu_j)| \leq 10^{-6}$ for all $j$.  
Therefore, in practice, the overall complexity of Algorithm~\ref{consDisBetaNMF} is dominated by the matrix products that require $\mathcal{O}\left(FNK \right)$ operations. 
The same conclusions apply to the algorithms presented in 
Sections~\ref{sec_ssbetaNMF}, \ref{sec_minvolKLNMF} and~\ref{sec_nonlinear}, and this will be confirmed by our numerical experiments. 

\algsetup{indent=2em}
\begin{algorithm}[ht!]
\caption{$\beta$-NMF with disjoint linear constraints \label{consDisBetaNMF}}
\begin{algorithmic}[1] 
\REQUIRE A matrix $V \in \mathbb{R}^{F \times N}$, 
an initialization $H \in \mathbb{R}^{K \times N}_+$ and $W  \in \mathbb{R}^{F \times K}$, 
a factorization rank $K$, 
a maximum number of iterations, maxiter, 
a value for \jiter{$\beta\not\in(1,2)$}, and 
the linear constraints defined by 
$\mathcal{K}_{i}$, $\alphab_j$ and $b_i$ for $i=1,2,\dots,I$, and 
$\mathcal{B}_{j}$, $\gammab_j$ and $c_j$ for $j=1,2,\dots,J$.  
\ENSURE A rank-$K$ NMF $(W,H)$ of $V$ satisfying constraints in \eqref{eq:betaNMFmodel}. 
    \medskip  
\FOR{$it$ = 1 : maxiter}
    \STATE \emph{\% Update of matrix $H$} 
    \STATE $C \leftarrow W^{T}\left( \left( WH \right)^{.\left( \beta-2 \right) } \odot V \right)$
    \STATE $D \leftarrow W^{T}\left( WH\right)^{.\left( \beta-1 \right) }$
    \FOR{$j = 1 : J$}
            \STATE $\mu_{j} \leftarrow \text{root}\left( r_j(\mu_j) \right)$ \quad \emph{\% see Equation~\eqref{eq:fmulincstr} }
            \STATE $H\left(\mathcal{B}_{j}\right) \leftarrow H\left(\mathcal{B}_{j}\right) \odot \left( \frac{\left[C\left(\mathcal{B}_{j}\right) \right]}{\left[ D\left(\mathcal{B}_{j}\right) -  \mu_{j}  \gamma_j \right]} \right)^{.\left( \eta \left( \beta \right)\right)}$
    \ENDFOR
    \STATE $\mathcal{B}_{c} = \{ (k,n) \ | 1 \leq k  \leq K, 1 \leq n \leq N \} \; \backslash \; \big(\cup_{j}^{J} \mathcal{B}_{j} \big).$ \quad \emph{\% $\mathcal{B}_{c}$ is the complement of $\cup_{j}^{J} \mathcal{B}_{j}$  }
    
    \STATE $H\left(\mathcal{B}_{c}\right) \leftarrow H\left(\mathcal{B}_{c}\right) \odot \left( \frac{\left[C\left(\mathcal{B}_{c}\right) \right]}{\left[ D\left(\mathcal{B}_{c}\right) \right]} \right)^{. \eta \left( \beta \right)}$
	\STATE \emph{\% Update of matrix $W$}
    \STATE $W$ is updated in the same way as $H$, by symmetry of the problem. 
\ENDFOR
\end{algorithmic}  
\end{algorithm}

\section{Showcase 1: Simplex-structured $\beta$-NMF}\label{sec_ssbetaNMF}

In this section, we showcase a particularly important example of $\beta$-NMF with linear disjoint constraints \ji{and no penalization}, namely, the simplex-structured matrix factorization (SSMF) problem. It is defined as follows:  
given a data matrix  $V \in \mathbb{R}^{F \times N}$  and a factorization rank $K$, SSMF  refers  to  the  problem of computing $W$ and $H$ such that $V \approx WH$ and  the  columns  of $H$ lie on the unit simplex, that is, the entries of each column of $H$ are nonnegative and sum to one. SSMF is a powerful tool in many applications such as hyperspectral unmixing in geoscience and remote sensing \cite{6200362,6678258,abdolali2020simplex}, 
document analysis~\cite{chi2012tensors},  
and 
self-modeling curve resolution~\cite{neymeyr2018set}. 
We refer the reader to the recent survey \cite{fu2019nonnegative} for more applications and details about SSMF. 

To understand the underlying significance of SSMF, it is necessary to give more insights on a research topic for which important SSMF techniques were initially developed which is the blind Hyperspectral Unmixing (HU), a main research topic in  remote sensing. The task of blind HU is to decompose a remotely sensed hyperspectral image into endmember spectral signatures and the corresponding abundance maps with limited prior information, usually the only known information being the number of endmembers. In this context, the columns of~$W$ correspond to the endmembers spectral signatures and the columns of~$H$ contain the proportion of the endmembers in each column of~$V$, so the column-stochastic assumption for $H$ naturally holds. The nonnegativity of $W$ follows from the nonnegativity of the spectral signatures.  
We refer to the corresponding problem as simplex-structured nonnegative matrix factorization with the $\beta$-divergence ($\beta$-SSNMF), and is formulated as follows: 
\begin{equation}\label{eq:betaSSNMFmodel}
\underset{W\in \mathbb{R}^{F \times K}_+,H\in \mathbb{R}^{K \times N}_+}{\min}
 D_{\beta}\left(V|WH\right) 
\quad \text{ such that } \quad  
 \eb^{T} \hb_j = 1 \text{ for } 1\leq j \leq N, 
\end{equation}
where $\eb$ is the vector of all ones of appropriate dimension. 
This is particular case of \eqref{eq:betaNMFmodel} where 
\begin{itemize}
    \item the subsets $\mathcal{B}_j$ correspond to the columns of $H$, and there is no subset $\mathcal{K}_i$ (no constraint on $W$),  
    \item $\gammab_j^{T} = \eb$ and $c_j = 1$ for $j=1,2,\dots,N$. 
\end{itemize}
Hence Algorithm~\ref{consDisBetaNMF} can be directly applied to~\eqref{eq:betaSSNMFmodel}.

\paragraph{Numerical experiments}  Let us perform numerical experiments to evaluate the effectiveness of Algorithm~\ref{consDisBetaNMF}  on the simplex-structure $\beta$-NMF problem against existing methods. To the best of our knowledge, the so-called group robust NMF (GR-NMF)  algorithm\footnote{\href{https://www.irit.fr/~Cedric.Fevotte/extras/tip2015/code.zip}{https://www.irit.fr/$\sim$Cedric.Fevotte/extras/tip2015/code.zip}} from \cite{7194802} is the most recent algorithm that is able to tackle problem \eqref{eq:betaSSNMFmodel} for the full range of $\beta$-divergences. 
The approach is not based on Lagrangian multipliers but introduces a change of variables for matrix $H$. This approach, initially used for NMF in \cite{1381036}, does not provide an auxiliary function for the subproblem in $H$ and resort to a heuristic commonly used in NMF, see for example  \cite{4100700,fevotte2009nonnegative}. 
Therefore there is no guarantee that the objective function is decreasing at each update of the abundance matrix, unlike Algorithm~\ref{consDisBetaNMF}.

We apply Algorithm~\ref{consDisBetaNMF} and GR-NMF on three widely used real hyperspectral data sets\footnote{\href{http://lesun.weebly.com/hyperspectral-data-set.html}{http://lesun.weebly.com/hyperspectral-data-set.html}}~\cite{zhu2017hyperspectral}:  
\begin{itemize}
    
    \item Samson:   156 spectral bands with 95$\times$95 pixels, containing mostly 3 materials $(K=3)$, 
    namely ``Soil", ``Tree" and ``Water". 

\item Jasper Ridge: 198 spectral bands with 100$\times$100 pixels, containing mostly 4 materials $(K=4)$, 
    namely ``Road", ``Soil", ``Water" and ``Tree".

    \item Cuprite: 188 spectral bands with 250$\times$190 pixels, containing mostly 12 types of minerals $(K=12)$.

\end{itemize}

$\beta$-SSNMF has shown itself as a powerful one to tackle blind HU, hence this comparative study between Algorithm~\ref{consDisBetaNMF} and GR-NMF~\cite{7194802} focuses on the convergence aspects including the  evolution of the objective function and the runtime. The algorithms are compared\revise{\footnote{\revise{
For $\beta = 3/2$, we had an error in our derivations, and use~\eqref{eq:updateW} with $\eta(\beta) = 1$ for $\beta \in (1,2)$; see the discussion after~\eqref{eq:updateW}. 
However, the corresponding MU always decreases the objective function values (which we were monitoring), although we do not have a theoretical justification for this. A possible approach to obtain such as result would be to come up with a majorizer of the majorizer that has a closed-form minimizer given by~\eqref{eq:updateW} with $\eta(\beta) = 1$ for $\beta \in (1,2)$. 
}
}} 
for $\beta \in  \left\{0,\frac12,1,\frac32,2\right\}$. 
To report the results, we use the relative objective function, denoted $\Bar{F}(W,H)$ and defined as\footnote{For the Frobenius norm, that is, $\beta = 2$, the relative error is typically defined as $\frac{D_\beta(V|WH)}{D_\beta(V|0)}$ meaning that the trivial solution used is the all-zero matrix. However, for other $\beta$-divergences, the value of $D_\beta(V|0)$ might not be defined; in particular, for $\beta \leq 1$ and $v_{fn} > 0$ for some $f,n$. } 
\begin{equation*}
    \Bar{F}(W,H) = \frac{D_\beta(V|WH)}{D_\beta(V|v \eb\eb^T)} , 
\end{equation*} 
where $v = \frac{\eb^T V \eb}{FN}$ is the average of the entries of $V$. 
The relative error  $\Bar{F}$ should be between 0 and 1: it is equal to 0 for an exact decomposition with $V = WH$, and is equal to 1 for a trivial rank-one approximation where all entries are equal to the average of the entries of $V$. 
This allows to meaningfully interpret the results, especially since we consider in this comparative study multiple values for $\beta$. In fact, the degree of homogeneity of the $\beta$-divergence is a  function of $\beta$.
For example, if all the entries of the input matrix are multiplied by 10 and keeping the same  NMF solution properly scaled,   the squared Frobenius error ($\beta$ = 2) is multiplied by 100 while the IS-divergence ($\beta$ = 0) is not affected. 
\color{black}

As for all tests performed in this paper, the algorithms are tested on a desktop computer with Intel Core i7-8700@3.2GHz CPU and 32GB memory. The codes are written in MATLAB R2018a, and available from \href{https://sites.google.com/site/nicolasgillis/}{https://sites.google.com/site/nicolasgillis/}.  
For all simulations, the algorithms are run for 20 random initializations of $W$ and $H$ (each entry sampled from the uniform distribution in $[0,1]$). 
Table~\ref{table:ssbetaNMFperf} reports the average and standard deviation of the runtime (in seconds) as the final value for the {relative} objective function over these 20 runs for a maximum of 300 iterations. 
\begin{table}[ht!]
\centering
\caption{Runtime performance in seconds and final value of {relative} objective function ${\Bar{F}}_\text{end}(W,H)$ for Algorithm~\ref{consDisBetaNMF} and the GR-NMFreported for $\beta \in  \left\{0,\frac12,1,\frac32,2\right\}$. The table reports the average and standard deviation over 20 random initializations with a maximum of 300 iterations for three hyperspectral data sets. A bold entry indicates the best value for each experiment. 
}
\label{table:ssbetaNMFperf}
\setlength{\tabcolsep}{2pt}
\resizebox{\columnwidth}{!}
{
\begin{tabular}{|c|c|c|c|c|c|c|}
  \hline
  Algorithms       & \multicolumn{2}{|c|}{Samson} & \multicolumn{2}{|c|}{Jasper Ridge} & \multicolumn{2}{|c|}{Cuprite}\\
                   & runtime (s.) & ${\Bar{F}}_\text{end}(W,H)$ & runtime (s.) & ${\Bar{F}}_\text{end}(W,H)$ & runtime (s.) & ${\Bar{F}}_\text{end}(W,H)$ \\
  \hline
                   &\multicolumn{6}{|c|}{$\beta=2$}\\
  \hline 
   Algorithm~\ref{consDisBetaNMF}   &\textbf{16.62$\pm$0.15}  &{\textbf{(1.89$\pm$0.04)}$10^{-3}$}  &\textbf{22.86$\pm$0.08}  &{\textbf{(4.68 $\pm$ 0.39)$10^{-3}$}} &121.04 $\pm$ 0.62 &{ \textbf{(0.98 $\pm$ 0.06)$10^{-3}$}} \\

  GR-NMF &18.23$\pm$0.29  &{(1.91$\pm$0.05)$10^{-3}$}  &25.32$\pm$0.16  &{(5.87 $\pm$ 1.22)$10^{-3}$} &\textbf{114.27 $\pm$ 0.20} &{ (1.29 $\pm$ 0.07)$10^{-3}$}\\

  \hline
                  &\multicolumn{6}{|c|}{$\beta=3/2$}\\
  \hline 
   Algorithm~\ref{consDisBetaNMF}   &\textbf{63.69$\pm$0.40}  &{\textbf{(2.52 $\pm$0.78)$10^{-3}$}}  & \textbf{89.23 $\pm$0.30} &{ \textbf{(4.92 $\pm$ 0.29)$10^{-3}$}} &\textbf{421.49 $\pm$ 2.79} &{\textbf{(1.54 $\pm$ 0.07)$10^{-3}$}}\\

  GR-NMF &80.09$\pm$0.60    &{(2.60 $\pm$0.63)$10^{-3}$}    &112.72 $\pm$0.67  &{(6.32 $\pm$ 1.37)$10^{-3}$}  &508.57 $\pm$ 3.50 &{(2.01 $\pm$ 0.09)$10^{-3}$}\\

  \hline
                  &\multicolumn{6}{|c|}{$\beta=1$}\\
  \hline 
   Algorithm~\ref{consDisBetaNMF}   &\textbf{18.33 $\pm$ 0.08} &{\textbf{(3.54 $\pm$ 0.27)}$10^{-3}$ } &\textbf{24.82 $\pm$ 0.35}  &{ \textbf{(6.07 $\pm$ 0.21)$10^{-3}$}} &\textbf{182.98 $\pm$ 14.14} &{\textbf{(2.07 $\pm$ 0.09)$10^{-3}$}}\\

  GR-NMF &44.78 $\pm$ 0.18 &{(3.77 $\pm$ 0.38 ) $10^{-3}$} &62.83 $\pm$ 0.76  &{ (7.26 $\pm$ 1.50)$10^{-3}$ }&370.25 $\pm$ 21.33 &{(2.67 $\pm$ 0.10)$10^{-3}$}\\

  \hline
  
                  &\multicolumn{6}{|c|}{$\beta=1/2$}\\
  \hline 
   Algorithm~\ref{consDisBetaNMF}   &\textbf{89.80 $\pm$ 0.65}  &{(7.21 $\pm$ 0.75)$10^{-3}$}  &\textbf{126.43 $\pm$ 0.61}  &{ \textbf{(1.08 $\pm$ 0.10)$10^{-2}$}}&682.80 $\pm$ 3.32 &{\textbf{(3.13 $\pm$ 0.15)$10^{-3}$}}\\

  GR-NMF &102.21 $\pm$ 0.72  &{\textbf{(6.93 $\pm$ 0.88)$10^{-3}$}}  &141.75 $\pm$ 0.69  &{ (1.12 $\pm$ 0.13)$10^{-2}$} &\textbf{642.49 $\pm$ 1.22} &{(3.14 $\pm$ 0.14)$10^{-3}$}\\
  \hline
                  &\multicolumn{6}{|c|}{$\beta=0$}\\
  \hline 
   Algorithm~\ref{consDisBetaNMF}   &\textbf{52.89$\pm$0.54}  &{(4.60 $\pm$ 0.66)$10^{-2}$}  &\textbf{69.59$\pm$0.44}  &{\textbf{(3.76 $\pm$ 0.11)$10^{-2}$}} &479.84 $\pm$ 16.02 &{(4.39 $\pm$ 0.31)$10^{-3}$}\\

  GR-NMF &55.61$\pm$0.47  &{\textbf{(4.22 $\pm$ 0.79)$10^{-2}$  }}&77.87$\pm$0.63  &{(3.76 $\pm$ 0.44)$10^{-2}$} &\textbf{354.65 $\pm$6.01} &{\textbf{(3.35 $\pm$ 0.10)$10^{-3}$}}\\

  \hline
\end{tabular} 
}
\end{table}

We observe that  Algorithm~\ref{consDisBetaNMF} outperforms the GR-NMF in terms of runtime and final values for the {relative} objective function for all test cases except when $\beta=0$ for the Samson and Cuprite data sets. In particular, for $\beta=1$,  Algorithm~\ref{consDisBetaNMF} is up to 2.5 times faster than the GR-NMF.
For the Cuprite data set with $\beta=1/2$, Algorithm~\ref{consDisBetaNMF} and GR-NMF perform similarly. We also observe that the standard deviations obtained with Algorithm~\ref{consDisBetaNMF} are in general significantly smaller for all $\beta$, except for $\beta=0$ for the Samson and Cuprite data sets. 

In the supplementary {material} S1, we provide figures that show the  evolution of the {relative} objective function values with respect to iterations, and that confirm the observations above. 
\section{Showcase 2: minimum-volume KL-NMF}  \label{sec_minvolKLNMF}


In this section, we showcase another important example of $\beta$-NMF with linear disjoint constraints, namely, the minimum volume NMF with the $\beta$-divergences (min-vol $\beta$-NMF) model. This model is based on the minimization of $\beta$-divergences including a
penalty term promoting solutions with minimum volume spanned by the columns of the matrix $W$. It is defined as follows:
\begin{equation}\label{eq:4}
 \underset{W\in \mathbb{R}^{F \times K}_+,H\in \mathbb{R}^{K \times N}_+}{\min}  
 D_{\beta}(V|WH) + \lambda \text{vol}(W) \quad \text{ such that } \quad 
 W^T e = e. 
\end{equation} 
where
$\lambda$ is a penalty parameter,
and 
$\text{vol}(W)$ is a function measuring the volume spanned by the columns of $W$.  In \cite{9084229}, the authors use 
$\text{vol}(W)=\logdet(W^{T}W+\delta I)$, 
 where $\delta$ is a small positive constant that prevents $\logdet(W^{T}W)$ to go to $-\infty$ when $W$ tends to a rank-deficient matrix (that is, when $\text{rank}(W)< K$). 
This model is particularly powerful as it leads to identifiability which is crucial in many applications such as in hyperspectral imaging or audio source separation~\cite{fu2019nonnegative}. 
Indeed, under some mild assumptions and in the exact case, authors prove in \cite{9084229} that \eqref{eq:4} is able to identify the groundtruth factors $(W^\#,H^\#)$ that generated the input data $V$, in the absence of noise. 
In \cite{9084229}, \eqref{eq:4} is used for blind audio source separation. In a nutshell, blind audio source separation consists in isolating and extracting unknown sources based on an observation of their mix recorded with a single microphone\footnote{We invite the interested reader to watch the video \href{https://www.youtube.com/watch?v=1BrpxvpghKQ}{https://www.youtube.com/watch?v=1BrpxvpghKQ} to see the application of min-vol KL-NMF on the decomposition of a famous song from the city of Mons.}. 
We have to mention that model \eqref{eq:4} is also well suited for hyperspectral imaging as discussed in \cite{gillis2020bk}.

In the next subsections, we show that we can tackle the min-vol $\beta$-NMF optimization problem defined in \eqref{eq:4} with the general framework presented in Section~\ref{sec_generalframework} in the case $\beta = 1$.  


\subsection{Problem formulation and algorithm}

As the minimum-volume penalty of model \eqref{eq:4} concerns matrix $W$ only, the main challenge concerns the update of $W$. Indeed, the update of $H$ is simply the one from~\cite{algoNMFlee}. Let us therefore consider the subproblem in $W$ for $H$ fixed: 
\begin{equation}\label{eq:betaminvolNMFinW} 
\underset{W\in \mathbb{R}^{F \times K}_+}{\min}
D_{\beta}\left(V|WH\right)+ \lambda \logdet(W^{T}W+\delta I) 
\quad 
\text{such that} \quad  \eb^{T} \wb_i = 1 \text{ for } 1\leq i \leq K. 
\end{equation}
 Compared to the general model \eqref{eq:betaNMFmodel}, we have that 
\begin{itemize}
    \item the subsets $\mathcal{K}_{i}$ correspond to the columns of $W$, and there is no subset $\mathcal{B}_j$, 
    \item $\alphab_i^{T} = \eb$ and $b_i = 1$ for $1\leq i \leq K$. 
\end{itemize}
To upper bound $\logdet(W^{T}W+\delta I)$ as required by~\eqref{eq:Lipschtiz_case} in Assumption~\ref{ass:phi}, 
we majorize it using a convex quadratic separable auxiliary function  provided in \cite[Eq.~(3.6)]{9084229} and which is derived as follows. First, the concave function $\logdet(Q)$ for $Q \succ 0$ can be upper bounded using the first-order Taylor approximation: for any $\widetilde{Q} \succ 0$, 
\[
\logdet( Q ) 
\; \leq \;  \logdet( \widetilde{Q} ) + \langle 
\widetilde{Q}^{-1} , Q - \widetilde{Q} \rangle 
\; =  \;
\langle \widetilde{Q}^{-1} , Q \rangle + \text{cst}, 
\]
where cst is some constant independent of $Q$.  
For any $W, \widetilde{W}$, and denoting  
$\widetilde{Q} = \widetilde{W}^T \widetilde{W} + \delta I \succ 0$, we obtain  
\[
\logdet( W^T W + \delta I ) 
\,  \leq \, 
\left\langle \widetilde{Q}^{-1} ,  
W^T W  \right\rangle + \text{cst} 
\, = \, \tr ( W \widetilde{Q}^{-1} W^T ) + \text{cst} , 
\]
which is a convex quadratic and Lipschitz-smooth function in $W$. 
In fact, letting $\widetilde{Q}^{-1} = DD^T$ be a decomposition (such as Cholesky) of  $\widetilde{Q}^{-1} \succ 0$, we have 
$\tr ( W \widetilde{Q}^{-1} W ) = \| WD \|_F^2$, 
from which~\eqref{eq:Lipschtiz_case} can be derived easily; see  \cite{9084229} for the details. 
With this and following our framework from Section~\ref{sec_generalframework}, we obtain the Lagrangian function 
\begin{equation}\label{eq:Gmuminvol} 
G^{\mu}\big(W|\widetilde{W}\big) = \sum_{f} G\left( \wb_f |\widetilde{\wb}_f \right) + \lambda \left( \sum_{f} \bar{l}\left( \wb_f |\widetilde{\wb}_f\right) + c\right) 
 +\mub^{T} \sum_{f}\left(\wb_{f} - \frac1F \eb \right), 
\end{equation} 
where \ji{$\wb_f\in$ denotes the $f$-th row of $W$}, 
$G$ is given by \eqref{eq:G}, $\bar{l}$ by~\cite[Eq.~(3.6)]{9084229} and derived as explained above, and $c$ is a constant. Let $\mub$ is the vector Lagrange multipliers of dimension $K$ 
associated to each linear constraint $\eb^{T} \wb_i = 1$. 
Exactly as before (hence we omit the details here), $G^{\mu}$ is separable and,  given $\mub$, one can compute the closed-form solution: 
\begin{equation}\label{eq:updateWminvolNMF}
\begin{aligned}
& W^\star(\mu) = \widetilde{W} \odot  
\frac{\left[\left[ \left[ C + \eb \mub^{T} \right]^{.2}+S \right]^{.\frac{1}{2}} - \left(C + \eb  \mub^{T}\right)\right]}{\left[ D  \right] }, 
\end{aligned}
\end{equation} 
where 
$C=e_{F,N}  H^{T} - 4\lambda \big( \widetilde{W} Y^{-}\big)$, $D =4\lambda \widetilde{W} \left( Y^{+}+Y^{-}\right)$, 
and 
$S=8 \lambda \widetilde{W} \left( Y^{+}+Y^{-}\right)\odot \left( \frac{\left[ V \right]}{\left[\widetilde{W}H\right]} H^{T} \right)$ 
with 
$Y=Y^{+}-Y^{-}=\big(\widetilde{W}^{T}\widetilde{W}+\delta I\big)^{-1}$, 
$Y^{+} = \text{max}(Y,0) \geq 0$ 
and 
$Y^{-} = \text{max}(-Y,0) \geq 0$, and 
$e_{F,N}$ is the $F$-by-$N$ matrix of all ones. 
As proved in Proposition~\ref{prop:mu_unicity}, the constraint  
$W^\star(\mu)^T \eb = \eb$ is satisfied for a unique $\mub$ in $\left(-\infty,t \right)$ where $t=\infty$ in this case. We can therefore use a Newton-Raphson method to find the $\mu_{i}$ with quadratic rate of convergence, see Proposition~\ref{NR}.
Algorithm~\ref{dcminvolKLNMF} summarizes our method to tackle~\eqref{eq:4}. 
\algsetup{indent=2em}
\begin{algorithm}[ht!]
\caption{Min-vol KL-NMF \label{dcminvolKLNMF}}
\begin{algorithmic}[1] 
\REQUIRE A matrix $V \in \mathbb{R}^{F \times N}$, an initialization $H \in \mathbb{R}^{K \times N}_+$, an initialization $W  \in \mathbb{R}^{F \times K}$ , a factorization rank $K$, and a maximum number of iterations, maxiter, the parameters $\delta > 0$ and $\lambda > 0$. 
\ENSURE A min-vol rank-$K$ NMF $(W,H)$ of $V$ satisfying constraints in \eqref{eq:4}. 
    \medskip  
\FOR{$it$ = 1 : maxiter}
    \STATE \emph{\% Update of matrix $H$}
	\STATE  $H \leftarrow H \odot \frac{\left[  W^{T} \left( \frac{\left[ V\right]}{\left[ WH\right]} \right) \right]}{\left[  W^{T} e_{F,N} \right]}$ \quad 
	\STATE \emph{\% Update of matrix $W$}
	\STATE $Y \leftarrow \left(W^{T}W+\delta I \right)^{-1}$
	\STATE $Y^{+} \leftarrow \text{max}\left(Y,0\right)$
	\STATE $Y^{-} \leftarrow \text{max}\left(-Y,0\right)$
    \STATE $C \leftarrow e_{F,N}  H^{T} - 4\lambda \left( W Y^{-}\right)$
    \STATE $S \leftarrow 8 \lambda W \left( Y^{+}+Y^{-}\right)\odot \left( \frac{\left[ V \right]}{\left[WH\right]} H^{T} \right)$
    \STATE $D  \leftarrow 4\lambda W \left( Y^{+}+Y^{-}\right)$
            \STATE $\mub \leftarrow 
            \text{root}\left( W^\star(\mu)^T \eb = \eb  \right)$ over $\mathbb{R}^K$  \quad \emph{\% see \eqref{eq:updateWminvolNMF} for the expression of $W^\star(\mu)$}   
    \STATE $W \leftarrow W \odot  \frac{\left[\left[ \left[ C + e  \mub^{T} \right]^{.2}+S \right]^{.\frac{1}{2}} - \left(C + \eb  \mub^{T}\right)\right]}{\left[ D  \right] }$
\ENDFOR
\end{algorithmic}  
\end{algorithm} 


\subsection{Numerical experiments}

In this section we compare baseline KL-NMF (that is, the standard MU), the min-vol KL-NMF from \cite[Algorithm 1]{9084229} that 
solves~\eqref{eq:4} using MU combined with line search (min-vol KL-NMF LS), 
and Algorithm~\ref{dcminvolKLNMF} applied to the spectrogram of two monophonic piano sequences considered in \cite{9084229}. The first audio sample is the first measure of ``Mary had a little lamb", a popular English song. The second audio sample corresponds to the first 30 seconds of ``Prelude and Fugue No.1 in C major" from de Jean-Sebastien Bach played by Glenn Gould\footnote{\href{https://www.youtube.com/watch?v=ZlbK5r5mBH4}{https://www.youtube.com/watch?v=ZlbK5r5mBH4}}. 
We use the following three setups: 
\begin{itemize}
    \item Setup $\sharp$1: sample ``Mary  had  a  little  lamb" with \mbox{$K=3$}, 200 iterations. 
    \item Setup $\sharp$2: sample ``Mary  had  a  little  lamb" with \mbox{$K=7$}, 200 iterations. 
    \item Setup $\sharp$3: ``Prelude and Fugue No.1 in C major" with $K=16$, 300 iterations.
\end{itemize}
For each setup, the algorithms are run for the same 20 random initializations of $W$ and $H$. 
Table~\ref{table:runtimeperf} reports the average and standard deviation of the runtime (in seconds) over these 20 runs. Table~\ref{table:lossfunperf} reports the average and standard deviation of the final values for $\beta$-divergences (data fitting term) and the objective function of \eqref{eq:4} over these 20 runs for min-vol KL-NMF LS and Algorithm~\ref{dcminvolKLNMF}. For this last comparison, the value for the penalty weight $\lambda$ has been chosen so that KL-NMF leads to reasonable solutions for $W$ and $H$. More precisely, the values for $\lambda$ are chosen so that the initial value of  $\frac{\lambda \left| \logdet( {W^{(0)}}^T W^{(0)} + \delta I)\right|}{D_{\beta}\left(V|WH \right)}$  is  equal to 0.1, 0.1 and 0.022 for setup $\sharp$1, setup $\sharp$2 and setup $\sharp$, respectively. 
\begin{center}
\begin{table}[ht!]
\begin{center}
\caption{ Runtime  performance in seconds of baseline KL-NMF, min-vol KL-NMF LS and Algorithm~\ref{dcminvolKLNMF}. The table reports the average and standard deviation over 20 random initializations. 
} 
\label{table:runtimeperf}
\begin{tabular}{|c|c|c|c|}
  \hline
  Algorithms       & \multicolumn{3}{|c|}{runtime in seconds} \\
                   & setup $\sharp$1 & setup $\sharp$2 & setup $\sharp$3  \\
  \hline
  baseline KL-NMF   & 0.53$\pm$0.03 & 0.45$\pm$0.02 & 4.32$\pm$0.30  \\
 
  \hline
  min-vol KL-NMF LS \cite{9084229} & 3.79$\pm$0.13 & 2.39$\pm$0.30 & 10.19$\pm$1.28     
      \\  \hline
  Algorithm~\ref{dcminvolKLNMF}    & 0.58$\pm$0.03 & 0.66$\pm$0.03 & 4.80$\pm$ 0.38                  \\  \hline
\end{tabular} 
\end{center}
\end{table}
\end{center}



\begin{center}
\begin{table}[ht!]

\begin{center}
\caption{
Final values for $D_{\beta}$ and the penalized objective $\Psi$ from \eqref{eq:4} obtained with min-vol KL-NMF LS and Algorithm~\ref{dcminvolKLNMF}. The table reports the average and standard deviation over 20 random initializations for three experimental setups. A bold entry indicates the best value for each experiment. 
} 
\label{table:lossfunperf}
\begin{tabular}{|c|c|c|c|}
\hline
\multicolumn{2}{|c|}{\hfill }   & min-vol KL-NMF LS \cite{9084229}& Algorithm~\ref{dcminvolKLNMF}  \\   \hline 
 \multirow{2}{4em}{setup $\sharp$1} & $D_{\beta,\text{end}}$ & (3.52 $\pm$ 0.03)$10^{3}$ & \textbf{(2.31 $\pm$ 0.01)$10^{3}$} \\
                 & $\Psi_\text{end}$              & (4.17 $\pm$ 0.03)$10^{3}$ & \textbf{(3.08 $\pm$ 0.01)$10^{3}$} \\ \hline
 \multirow{2}{4em}{setup $\sharp$2} & $D_{\beta,\text{end}}$ & (3.54 $\pm$ 0.03)$10^{3}$ & \textbf{(1.77 $\pm$ 0.02)$10^{3}$} \\
                 & $\Psi_\text{end}$              & (4.42 $\pm$ 0.04)$10^{3}$ & \textbf{(2.87 $\pm$ 0.02)$10^{3}$} \\ \hline
 \multirow{2}{4em}{setup $\sharp$3} & $D_{\beta,\text{end}}$ & (7.77 $\pm$ 0.23)$10^{3}$ & \textbf{(4.67 $\pm$ 0.08)$10^{3}$} \\
                 & $\Psi_\text{end}$              & (9.14 $\pm$ 0.20)$10^{3}$ & \textbf{(6.50 $\pm$ 0.06)$10^{3}$} \\ \hline

\end{tabular}
\end{center}
\end{table}
\end{center}

We observe that the runtime of Algorithm~\ref{dcminvolKLNMF} is close to the baseline KL-NMF algorithm which confirms the negligible cost of the Newton-Raphson steps to compute $\mub^\star$ as discussed in Section~\ref{betanmf_disjointconstraint}.
On the other hand, since no line search is needed, we have a drastic acceleration from 2x to 7x compared to the backtracking line-search procedure integrated in min-vol KL-NMF LS \cite{9084229}. 
Moreover, we observe in Table~\ref{table:lossfunperf} that Algorithm~\ref{dcminvolKLNMF} outperforms min-vol KL-NMF LS in terms of final values for the data fitting term and objective function values, with lower standard deviations. 
\section{Extension to quadratic disjoints constraints}\label{sec_nonlinear}
Our general framework presented in Section~\ref{sec_generalframework} applies to $\beta$-NMF under disjoint linear equality constraints with penalty terms satisfying Assumption~\ref{ass:phi}; 
see problem~\eqref{eq:betaNMFmodel}. 
We have showcased our approach on $\beta$-SSNMF in Section~\ref{sec_ssbetaNMF} and on min-vol KL-NMF under sum-to-one constraints on the columns of $W$ in Section~\ref{sec_minvolKLNMF}. 
In this section, we show that the same framework can be extended to other simple constraints, namely disjoint quadratic constraints. 

We consider sparse $\beta$-NMF for $\beta=1$ where 
the rows of $H$ are penalized with the $\ell_1$ norm and 
each column of $W$ have a fixed $\ell_2$ norm. 
We show that MU satisfying the set of constraints can be derived which we apply on blind HU. 

\subsection{Problem formulation and algorithm}

In this section we consider the following model involving quadratic disjoints constraints, that we refer to as hyperspheric-structured sparse $\beta$-NMF:
\begin{equation}\label{eq:betahsminvolNMFmodel}
 \underset{W\in \mathbb{R}^{F \times K}_+,H\in \mathbb{R}^{K \times N}_+}{\min} 
 D_{\beta}(V|WH) + \sum_{k=1}^{K}\lambda_{k} \left\| H(k,:) \right\|_{1} \\
\quad \text{such that} \quad  
 \eb^{T} \wb_j^{.(2)} = \rho \text{ for } 1\leq i \leq K, 
\end{equation}
where 
$\lambda_{k}$ is a penalty weight to control the sparsity of the $k$-th row of $H$, and the quadratic constraints require the columns of $W$ to lie on the surface of a hyper-sphere centered at the origin with radius $\sqrt{\rho} > 0$. 
Without this normalization, the $\ell_{1}$-norm regularization would make $H$ tends to zero and $W$ grows to infinity. 

As done before, we update $W$ and $H$ alternatively. We tackle the subproblem in $H$ with $W$ fixed based on the MU developed in \cite{fevotte2009nonnegative} and guaranteed to decrease the objective function: 
\begin{equation}\label{eq:updateHhssparseNMF}
\begin{aligned}
& H = \widetilde{H} \odot \frac{\left[  W^{T}\left(  V \odot \big[ W\widetilde{H}\big]^{.(\beta-2)} \right) \right]}{\left[  W^{T} \big[ W\widetilde{H}\big]^{.(\beta-1)} + \lambdab e^T \right]} , 
\end{aligned}
\end{equation} 
where $\lambdab\in\mathbb{R}_+^K$ is the vector of penalty weights.
It remains to compute an update for $W$. To do so, we use the convex separable auxiliary function $G$ from \cite{Fevotte_betadiv} constructed at the current iterate $\widetilde{W}$, from which we obtain, as before, the Lagrangian function 
\begin{equation}\label{eq:Gmuhsminvol}
\begin{aligned}
G^{\mu}\big(W|\widetilde{W}\big)&= \sum_{f} G\left( w_f |\widetilde{w}_f \right) + \sum_{k}\lambda_{k} \left\| H(k,:) \right\|_{1}+ {\mub}^{T} \sum_{f}\left(w_{f}^{.(2)} - \frac1F\rho \eb \right), 
\end{aligned}
\end{equation} 
where $\mub\in\mathbb{R}^K$ is the vector of Lagrange multipliers associated to the constraint $\eb^{T} \Wb^{.(2)} = \rho \eb^{T}$. 
Exactly as before (hence we omit the details here),  given $\mub$, 
one can obtain a closed-form solution: 
\begin{equation}\label{eq:updateHhsminvolNMF}
\begin{aligned}
& W^\star(\mub) = \frac{\left[\left[ \left[ C \right]^{.2}+8 \left( 
\eb \mub^{T}\right) \odot S \right]^{.\frac{1}{2}} - C\right]}{\left[ 4  \eb \mub^{T} \right] } , 
\end{aligned}
\end{equation}   
where  $C=e_{F,N}  H^{T}$ and $S=\widetilde{W}\odot \left(  \frac{\left[ V \right]}{\left[ \widetilde{W} {H}\right]} H^{T} \right)$. 
Let us now write the expression of the quadratic constraint $\sum_{f}(W^{\star}(\mub)_{f,i})^{2}-\rho=0$ for one specific column of $W$, say the $i$-th: 
\begin{equation} \label{eq:rootsparseNMF}
r_{i}\left( \mu_i \right) := \sum_{f}(W^{\star}_{f,i}(\mu_i))^{2}-\rho 
= \sum_{f} \left(\frac{\sqrt{ \left( C_{f,i}  \right)^{2}+8 \mu_{i} S_{f,i} } - C_{f,i} }{ 4 \mu_{i} }\right)^{2} -\rho =0. 
\end{equation} 
Computing the Lagrangian multiplier $\mu_{i}$ to satisfy the constraint requires computing the roots of the functions $r_{i}\left( \mu_i \right)$. We can show that each $W^{\star}_{f,i}(\mu_i)$ \eqref{eq:updateHhsminvolNMF} is a monotone decreasing, nonnegative convex function over $(0,+\infty)$. Therefore $\sum_{f}(W^{\star}_{f,i}(\mu_i))^{2}$ is also monotone decreasing and convex in $\mu_{i}$ over $(0,+\infty)$. Indeed, let $g$ : $\mathbb{R}_{+} \rightarrow \mathbb{R}_{+}$ be a monotone decreasing, nonnegative convex function. If $g$ is twice-differentiable, then $\left( g^{2}\right)'' = 2\left( g'\right)^{2}+2gg'' \geq 0$ since $g,g''\geq 0$ and $\left( g^{2} \right)'=2g'g \leq 0$ since  $g\geq 0$, $g'\leq 0$ by hypothesis. Now we can conclude that $r_{i}\left( \mu_i \right)$ is a monotone decreasing convex function over $(0,+\infty)$. Moreover, using Hospital's rule, we have: 
\begin{equation*}
\underset{\mu_{i}\rightarrow 0^{+}}{\text{lim}} \sum_{f}(W^{\star}_{f,i}(\mu_i))^{2}-\rho= +\infty 
\quad  \text { and } \quad 
\underset{\mu_{i}\rightarrow +\infty}{\text{lim}} \sum_{f}(W^{\star}_{f,i}(\mu_i))^{2}-\rho = -\rho <0,
\end{equation*} 
since $\rho > 0$. Therefore, the root of $r_{i}\left( \mu_i \right)$ is unique over  $(0,+\infty)$. We use a Newton-Raphson method to solve the problem.
Algorithm~\ref{dchsminvolKLNMF} summarizes our method. 

\algsetup{indent=2em}
\begin{algorithm}[ht!]
\caption{Hyperspheric-structured sparse KL-NMF \label{dchsminvolKLNMF}}
\begin{algorithmic}[1] 
\REQUIRE A matrix $V \in \mathbb{R}^{F \times N}$, an initialization $H \in \mathbb{R}^{K \times N}_+$, an initialization $W  \in \mathbb{R}^{F \times K}$ , a factorization rank $K$, 
a maximum number of iterations, maxiter, a weight vector $\lambda >0$.
\ENSURE A sparse rank-$K$ NMF $(W,H)$ of $V$ satisfying constraints in \eqref{eq:betahsminvolNMFmodel}. 
    \medskip  
\FOR{$it$ = 1 : maxiter}
	\STATE \emph{\% Update of matrix $H$}
    \STATE $H \leftarrow H \odot \frac{\left[  W^{T}\left(  V \odot \big[ WH\big]^{.(\beta-2)} \right) \right]}{\left[  W^{T} \big[ WH\big]^{.(\beta-1)} + \lambdab e^T \right]}$
    \STATE \emph{\% Update of matrix $W$}
    \STATE $C \leftarrow  e_{F,N} H^T$
    \STATE $S \leftarrow W \odot \left( \frac{\left[ V \right]}{\left[WH\right]} H^T \right)$
    \FOR{$j$ = 1 : K}
            \STATE $\mu_{i} \leftarrow \text{root}\left( r_{i}\left( \mu_i \right) \right)$ over $(0,+\infty)$ \emph{\% See Equation~\eqref{eq:rootsparseNMF}}
    \ENDFOR
	\STATE  $W \leftarrow \frac{\left[\left[ \left[ C \right]^{.2}+8 \left(  \eb \mub^T \right) \odot S \right]^{.\frac{1}{2}} - C\right]}{\left[ 4 \mub \eb^{T} \right] }$  
\ENDFOR
\end{algorithmic}  
\end{algorithm} 


\subsection{Numerical experiments}

In this section, we perform numerical experiments to evaluate the effectiveness of Algorithm~\ref{dchsminvolKLNMF} on the HU problem. 
To the best of our knowledge, sparse $\beta$-NMF\footnote{\href{http://www.jonathanleroux.org/software/sparseNMF.zip}{http://www.jonathanleroux.org/software/sparseNMF.zip}}  from \cite{Leroux} is the most recent algorithm that is able to tackle problem \eqref{eq:betahsminvolNMFmodel} for the KL-divergence by integrating the $\ell_2$-normalization for each update of matrix $W$. 
This approach is similar to that of~\cite{7194802} for $\beta$-SSNMF, that is, it uses  parametrization, and resort to a heuristic with no guarantee on the decrease of the objective function. 
We refer to this algorithm as $\beta$-SNMF. 

We apply Algorithm~\ref{dchsminvolKLNMF} and $\beta$-SNMF~\cite{Leroux} to the three real hyperspectral datasets detailed in Section~\ref{sec_ssbetaNMF}. This comparative study focuses on the convergence aspects including the evolution of the objective function and the runtime; we refer the interested reader to the Supplementary Material S2 for qualitative result on the ability of sparse $\beta$-NMF to decompose such images. 
For all simulations, the algorithms are ran for 20 random initializations of $W$ and $H$, the entries of the penalty weight $\lambdab$ has been set to 0.1, 0.05 and 0.05 for Samson, Jasper Ridge and Cuprite data sets, respectively. In order to fairly compare both algorithms, $\rho$ has been set to 1 as $\beta$-SNMF considers a $\ell_2$-normalization for the columns of $W$, and the entries of the weight vector $\lambdab$ in Algorithm~\ref{dchsminvolKLNMF} have the same values as $\beta$-SNMF requires to use the same values for all rows of $H$. Table~\ref{table:hsbetaNMFperf} reports the average and standard deviation of the runtime (in seconds) as the final value for the objective function over these 20 runs for a maximum of 300 iterations. 
Figure~\ref{fig:hsbetaNMF_lossfun_compa} displays the objective function values. 
\begin{table}[ht!]
\centering
\caption{Runtime performance in seconds and final value of objective function $\Phi_\text{end}(W,H)$ for Algorithm~\ref{dchsminvolKLNMF} and $\beta$-SNMF. The table reports the average and standard deviation over 20 random initializations with a maximum of 300 iterations for three hyperspectral data sets. A bold entry indicates the best value for each experiment.  
} 
\label{table:hsbetaNMFperf}
\setlength{\tabcolsep}{2pt}
\resizebox{\columnwidth}{!}{%
\begin{tabular}{|c|c|c|c|c|c|c|}
  \hline
  Algorithms       & \multicolumn{2}{|c|}{Samson data set} & \multicolumn{2}{|c|}{Jasper Ridge data set} & \multicolumn{2}{|c|}{Cuprite data set}\\
                   & runtime (sec) & $\Phi_\text{end}(W,H)$ & runtime (sec) & $\Phi_\text{end}(W,H)$ & runtime (sec) & $\Phi_\text{end}(W,H)$ \\
  \hline 
   Algorithm~\ref{dchsminvolKLNMF}   &11.07$\pm$0.19  &(2.68$\pm$0.00)$10^{3}$  &15.67$\pm$0.17  &\textbf{(4.65 $\pm$ 0.00)$10^{3}$} &70.16 $\pm$ 0.85 & \textbf{(2.12 $\pm$ 0.00)$10^{3}$} \\ 
  $\beta$-SNMF~\cite{Leroux}  &\textbf{7.63$\pm$0.13}  &(2.68$\pm$0.00)$10^{3}$ &\textbf{10.98$\pm$0.18}  &(4.71 $\pm$ 0.00)$10^{3}$ &\textbf{51.86 $\pm$ 0.74} & (2.18 $\pm$ 0.00)$10^{3}$\\
  \hline
\end{tabular} 
}
\end{table}
\begin{figure}[ht!]  
      \begin{tabular}{ccc} 
        \includegraphics[width=0.3\textwidth]{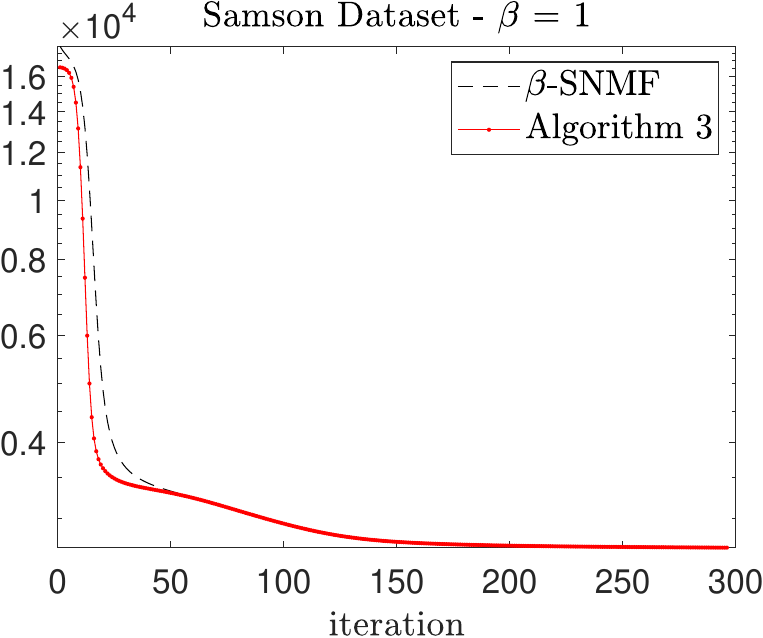}
    & 
        \includegraphics[width=0.3\textwidth]{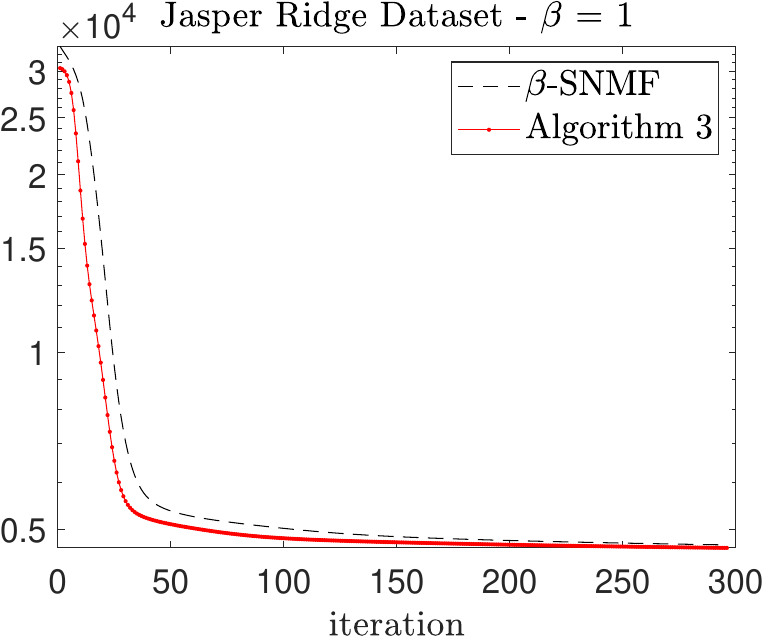}
    & 
        \includegraphics[width=0.312\textwidth]{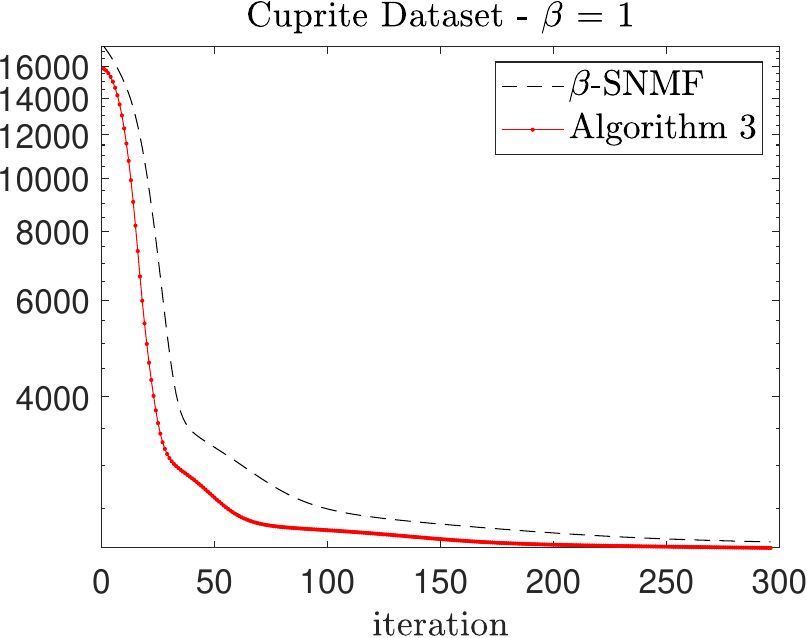}
    \\ 
        \includegraphics[width=0.3\textwidth]{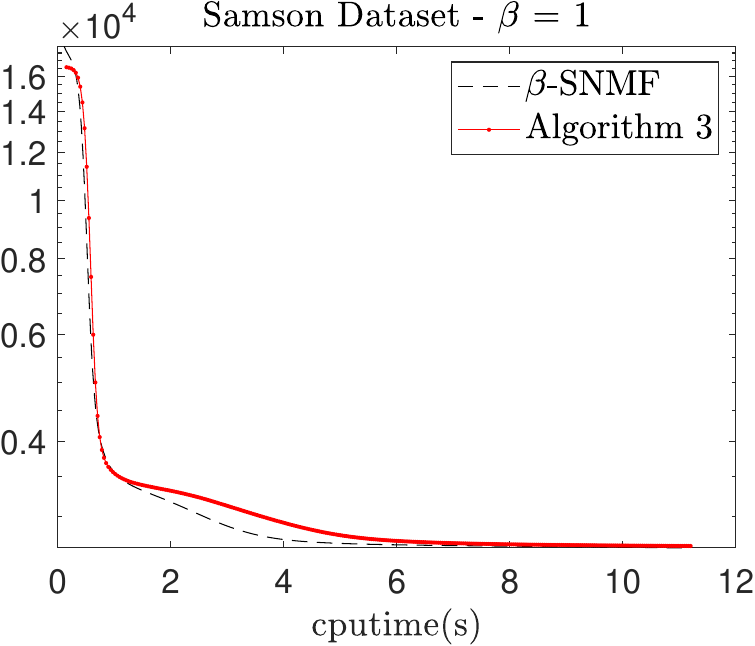}
    & 
        \includegraphics[width=0.3\textwidth]{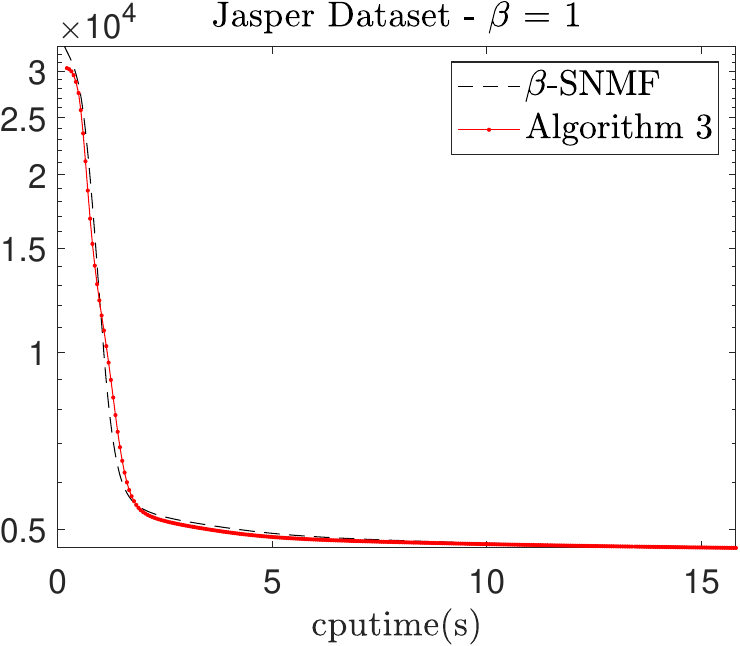}
   & 
        \includegraphics[width=0.312\textwidth]{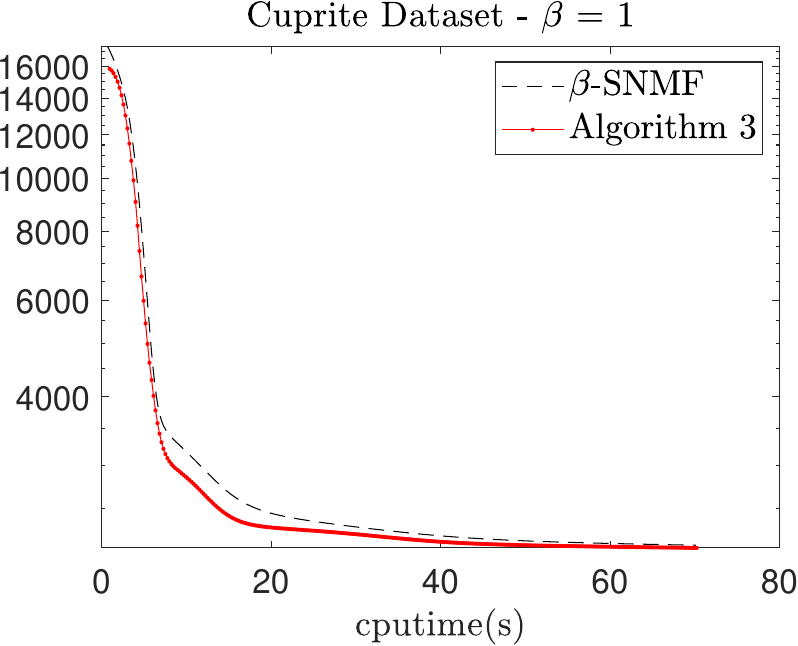}
    \end{tabular} 
    \caption{Averaged objective functions over 20 random initializations obtained for Algorithm~\ref{dchsminvolKLNMF} with 300 iterations (red line with circle markers), and the heuristic $\beta$-SNMF from \cite{Leroux} (black dashed line).  
    }\label{fig:hsbetaNMF_lossfun_compa}
\end{figure} 

According to Table~\ref{table:hsbetaNMFperf} (top row), we observe that  Algorithm~\ref{dchsminvolKLNMF} outperforms the heuristic from \cite{Leroux} in terms of final value for the objective functions while $\beta$-SNMF shows lower runtimes. 
Additionally, based on Figure~\ref{fig:hsbetaNMF_lossfun_compa}, we observe that Algorithm~\ref{dchsminvolKLNMF} converges on average faster than $\beta$-SNMF for all the data sets, in terms of iterations.  
However, $\beta$-SNMF has a lower computational cost per iteration.
Thus, we complete the comparison between both algorithms by imposing the same computational time: we run Algorithm~\ref{dchsminvolKLNMF} for 300 iterations, record the computational time and run $\beta$-SNMF for the same amount of time. 
\begin{table}[ht!]
\centering
\caption{Final value of objective function values $\Phi_\text{end}(W,H)$ for Algorithm~\ref{dchsminvolKLNMF} and the heuristic from \cite{Leroux}. The table reports the average and standard deviation over 20 random initializations for an equal computational time that corresponds to 300 iterations of Algorithm~\ref{dchsminvolKLNMF}.  A bold entry indicates the best value for each experiment. 
} 
\label{table:hsbetaNMFperf2}
\setlength{\tabcolsep}{2pt}
\begin{tabular}{|c|c|c|c|}
  \hline
  Algorithms       & Samson data set & Jasper Ridge data set & Cuprite data set\\
                   & $\Phi_\text{end}(W,H)$ & $\Phi_\text{end}(W,H)$ & $\Phi_\text{end}(W,H)$ \\
  \hline 
   Algorithm~\ref{dchsminvolKLNMF}   &(2.68$\pm$0.00)$10^{3}$  &\textbf{(4.65 $\pm$ 0.00)$10^{3}$} & \textbf{(2.12 $\pm$ 0.00)$10^{3}$} \\
  $\beta$-SNMF~\cite{Leroux}  &(2.68$\pm$0.00)$10^{3}$  &(4.66 $\pm$ 0.00)$10^{3}$  & (2.15 $\pm$ 0.00)$10^{3}$\\
  \hline
\end{tabular} 
\end{table}
Table~\ref{table:hsbetaNMFperf2} reports the average and standard deviation of the final value for the objective function over 20 runs in this setting.  Figure~\ref{fig:hsbetaNMF_lossfun_compa} (bottom row) displays the objective function w.r.t.\ time for the three data sets. 
On this comparison, Algorithm~\ref{dchsminvolKLNMF} and the heuristic from \cite{Leroux} perform similarly although Algorithm~\ref{dchsminvolKLNMF} has slightly better final objective function values. However, keep in mind that only Algorithm~\ref{dchsminvolKLNMF} is theoretically guaranteed to decrease the objective function. 
\section{Conclusion}\label{sec_conclusions}

In this paper we have presented a general framework to solve penalized $\beta$-NMF problems that integrates a set of disjoint constraints on the variables; see the general formulation~\eqref{eq:betaNMFmodel}.  
Using this framework, we showed that we can derive algorithms that compete favorably with the state of the art for a wide variety of $\beta$-NMF problems, such as  the simplex-structured NMF and the minimum-volume $\beta$-NMF with sum-to-one constraints on the columns of $W$. 
We have also shown how to extend the framework to non-linear disjoints constraints, with application to a sparse $\beta$-NMF model for $\beta=1$ where each column of $W$ lie on a hyper-sphere. 

Further works will focus on the possible extension of the methods to non-disjoints constraints. 
The non-disjoint constraints will lead to roots finding problems of  polynomial equations in the Lagrangian multipliers for which we hope to find conditions that ensure the uniqueness of the solution. 

Another interesting direction of research would be to apply our framework to other NMF models. For example, in probabilistic latent semantic analysis/indexing (PLSA/PLSI), the model is the following: given a nonnegative matrix $V$ such that $\eb^T V \eb = 1$ (this can be assumed w.l.o.g.\ by dividing the input matrix by $\eb^T V \eb$), solve 
    \[
    \max_{W \geq 0, H \geq 0, s \geq 0} 
    \sum_{f,n} v_{fn} \log(W\diag(s)H)_{fn} 
    \text{ such that } W^T \eb = \eb, H\eb = \eb, s^T \eb = 1 . 
    \] 
    This model is equivalent to KL-NMF~\cite{ding2008equivalence}, with the additional constraint that $\eb^T WH \eb = \eb^T X \eb$, and hence our framework is applicable to PLSA/PLSI. 
    Such constraints have also applications in soft clustering contexts; see~\cite{yang2016low}.


\section*{Acknowledgment}

We would like to thank the Associate Editor and the reviewers for taking the time to carefully read the paper and for the useful feedback that helped us improve the paper. 
We also thank Arthur Marmin for identifying an error in our derivations when $\beta \in (1,2)$ (indicated in red color in this version of the manuscript).



\appendix 

\section{Convexity, concavity and complete monotonicity for a convex-concave decomposition of the discrete $\beta$-divergence}\label{convconcacm}

The discrete $\beta$-divergence can always be expressed as the sum of convex, concave, and constant terms. In Table \ref{tab:decompbetadiv} we introduce a convex-concave decomposition of the $\beta$-divergence which slightly differ from the one given in \cite[Table\,1]{Fevotte_betadiv} (by the fact that ours contains no constant term $\widebar{d}$) as given in Table~\ref{tab:decompbetadiv}.

\begin{table}[h]
\centering
\ji{%
\setlength{\tabcolsep}{2pt}
\begin{tabular}{|c|c|c|c|c|c|}
\hline
\begin{tabular}{@{}c@{}}Decomposition\\$d_{\beta}=\widecheck{d}+\widehat{d}$\end{tabular}
&$\beta\in(-\infty,1)\setminus\{0\}$&$\beta=0$&$\beta=1$&$\beta\in(1,2)$ &   $\beta\in[2,+\infty)$    \\\hline
\phantom{\raisebox{8pt}{x}}
$\widecheck{d}(x|y)$& $\frac{1}{1-\beta} x \,y^{\beta -1}$    & $\frac{x}{y}$  & $-x\log y$  &  $\frac1{\beta}y^{\beta}-\frac1{\beta-1} xy^{\beta-1}$  & $\frac{1}{\beta} y^{\beta}$  \\\hline
\phantom{\raisebox{8pt}{x}}
$\widehat{d}(x|y)$&$ \frac{1}{\beta}y^{\beta}-\frac{1}{\beta\,(1-\beta)} x^{\beta}$&$\log\frac{y}{x} - 1$&$y + x\, \log x - x  $&$\frac{1}{\beta\,(\beta-1)} x^{\beta}$&$-\frac{1}{\beta-1} x \,y^{\beta -1}+\frac{1}{\beta\,(\beta-1)} x^{\beta}$\\\hline
\end{tabular}
}
\caption{
Proposed concave-convex decomposition of the discrete $\beta$-divergence.}
\label{tab:decompbetadiv}
\end{table}

In Table~\ref{tab:decompbetadiv}, $y\in(0,\infty)$, $\beta$ is real valued and $x\in(0,\infty)$. 
Further, $\beta$ and $x$ are considered as parameters, $d_{\beta}$, $\widehat{d}$ and $\widecheck{d}$ being handled as univariate functions of $y$.\\

Let us now recall the definition of a complete monotonic function $f$:
\begin{definition}
A function $f$ is said to be completely monotonic (c.m.) on an interval $I$ if $f$ has derivatives of all orders on $I$ and $(-1)^n f^{(n)}(x)\geq0$ for $x\in I$ and $n\geq0$.
\end{definition}
We can now introduce the properties of concavity, convexity and monotonicity for  our convex-concave formulation of the discrete $\beta$-divergence:

\begin{proposition} Given $\widecheck{d}(\cdot|\cdot)$ and $\widehat{d}(\cdot|\cdot)$ as defined above, we have that
\label{prop:approxdbeta}
\hfill
\begin{enumerate}
\item $\widecheck{d}(x|y)$ is $C^\infty$ and strictly convex on $(0,\infty)$ for $x>0$ and $\beta\in\mathbb{R}$;
\item $\widehat{d}(x|y)$ is concave for $x>0$ and $\beta\in\mathbb{R}$;
\item for all $\beta<2$, $\widecheck{d}''(x|y)$ and $\widehat{d}''(x|y)$ are c.m. 
\end{enumerate}
\end{proposition}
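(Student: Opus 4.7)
The plan is to proceed by case analysis over the five intervals of $\beta$ tabulated in Table~\ref{tab:decompbetadiv}. In each case, both $\widecheck{d}(x|\cdot)$ and $\widehat{d}(x|\cdot)$ are $x$-parameter linear combinations of monomials $y^\alpha$ (with, for $\beta=1$, a $\log y$ summand in $\widecheck{d}$), so smoothness on $(0,\infty)$ is automatic.

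For item 1, I would compute $\partial_y^2 \widecheck{d}$ case by case and check strict positivity. For $\beta\in(-\infty,1)\setminus\{0\}$ one gets $\partial_y^2\widecheck{d}=x(2-\beta)y^{\beta-3}>0$; for $\beta=0$ and $\beta=1$ one finds $2xy^{-3}$ and $xy^{-2}$ respectively; for $\beta\in(1,2)$ one has $\partial_y^2\widecheck{d}=(\beta-1)y^{\beta-2}+x(2-\beta)y^{\beta-3}$, both summands positive; and for $\beta\geq 2$ one gets $(\beta-1)y^{\beta-2}>0$. For item 2, the computation of $\partial_y^2\widehat{d}$ is equally direct: for $\beta\in(-\infty,1)\setminus\{0\}$ the second derivative is $(\beta-1)y^{\beta-2}\leq 0$; for $\beta=0$ it is $-y^{-2}\leq 0$; for $\beta=1$ the function is affine in $y$ so $\partial_y^2\widehat{d}\equiv 0$; for $\beta\in(1,2)$ the function is constant in $y$ so $\partial_y^2\widehat{d}\equiv 0$; and for $\beta\geq 2$ one gets $-x(\beta-2)y^{\beta-3}\leq 0$.

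For item 3, the key building block is the elementary fact that, on $(0,\infty)$, the map $y\mapsto c\,y^{\alpha}$ is completely monotonic whenever $c\geq 0$ and $\alpha\leq 0$: its $n$-th derivative equals $c\,\alpha(\alpha-1)\cdots(\alpha-n+1)\,y^{\alpha-n}$, and the identity
\[
(-1)^{n}\alpha(\alpha-1)\cdots(\alpha-n+1)=(-\alpha)(1-\alpha)\cdots(n-1-\alpha)
\]
shows the required nonnegative sign as soon as $\alpha\leq 0$ (and in fact as soon as $\alpha<1$ past the first factor). Since positive combinations of c.m.\ functions are c.m., it suffices to inspect the exponents and coefficients in each case. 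For $\widecheck{d}''$ and $\beta<2$, the exponents that appear are $\beta-3$ (all five cases of the middle row of my item~1 computation) together with $\beta-2\in(-1,0)$ in the sub-case $\beta\in(1,2)$: all satisfy $\alpha<0$, and all coefficients are strictly positive, hence $\widecheck{d}''$ is c.m. Analogously, for $\widehat{d}''$ (understood in the natural sign sense consistent with its nonpositivity, i.e., $-\widehat{d}''$ c.m., which is how it is used in the proof of Proposition~\ref{NR}), the only nonzero cases are $\beta\in(-\infty,1)\setminus\{0\}$ with $-\widehat{d}''=(1-\beta)y^{\beta-2}$ and $\beta=0$ with $-\widehat{d}''=y^{-2}$, both of which are c.m.\ by the monomial criterion; the cases $\beta=1$ and $\beta\in(1,2)$ are trivial since $\widehat{d}''\equiv 0$.

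The main obstacle is not any deep analytic fact but bookkeeping: keeping track of the signs of $\beta-1$, $\beta-2$, and $2-\beta$ across the five regimes, and being careful about the $\beta=0,1$ boundary cases where the formulas involve logarithms rather than monomials. Once the monomial c.m.\ lemma is isolated and the five cases tabulated, each verification collapses to a one-line sign check.
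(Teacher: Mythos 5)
Your proof is correct and takes essentially the same route as the paper's, which simply cites the strict convexity/concavity of $y^\nu$ and $\log y$ on $(0,\infty)$ and the complete monotonicity of $y^\nu$ for $\nu<0$, leaving the case-by-case sign checks implicit; you merely make those checks explicit. Your reading of item~3 for $\widehat{d}''$ as a statement about $-\widehat{d}''$ is the right one (a literally c.m.\ function must be nonnegative, whereas $\widehat{d}''\leq 0$), and since only the claim about $\widecheck{d}''$ is invoked later (in the proof of Proposition~\ref{NR}), this does not affect anything downstream.
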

\begin{proof}
The proof is straightforward, given that $\widecheck{d}(x|y)$ and $\widehat{d}(x|y)$ linearly combine $C^\infty$ functions on $(0,\infty)$, and that in the same interval, 
\begin{itemize}
\item $\log y$ is strictly concave;
\item $y^\nu$ is strictly convex for all  $\nu\in(-\infty,0)\cup(1,\infty)$, and strictly concave for all $\nu\in(0,1)$;
\item $y^\nu$ is c.m. for all $\nu<0$.
\end{itemize}
\end{proof}
According to the first two items of Proposition~\ref{prop:approxdbeta}, $\widecheck{d}$ and $\widehat{d}$ indeed yield a convex-concave decomposition of the $\beta$-divergence, which is a variant of \cite[Table\,1]{Fevotte_betadiv}. Let us remark that the successive minimization of an upper approximation of this convex-concave decomposition following the methodology presented in \cite{Fevotte_betadiv} yields to the usual multiplicative update scheme.

\small 
\bibliographystyle{siam}
\bibliography{Bibliography}


\normalsize 

\newpage 

\section*{Supplementary Material} 

This supplementary materials provide additional numerical experiments. 
In S1, we show the evolution of the error as a function of the iterations for Algorithm~\ref{consDisBetaNMF} and GR-NMF~\cite{7194802} for the tests performed in Section~\ref{sec_ssbetaNMF}. 
In S2, we provide qualitative results obtained with Algorithm~\ref{dchsminvolKLNMF} on hyperspectral images. 

\subsection*{S1. Evolution of the objective function for $\beta$-SSNMF} \label{appen_quali_res_betaSSNMF}

Figure~\ref{fig:ssbetaNMF_lossfun_compa_part1} displays the  evolution of the {relative} objective function values, that is, $\frac{D_\beta(V|WH)}{D_\beta(V|v \eb\eb^T)}$, of $\beta$-SSNMF for Algorithm~\ref{consDisBetaNMF} and GR-NMF~\cite{7194802} on the experiments described in Section~\ref{sec_ssbetaNMF}.  
As mentioned in the paper, Algorithm~\ref{consDisBetaNMF} performs better than   GR-NMF~\cite{7194802}, except for $\beta = 0$.  

\begin{figure}
    \centering  
    \begin{subfigure}[b]{0.28\textwidth}
        \includegraphics[width=\textwidth]{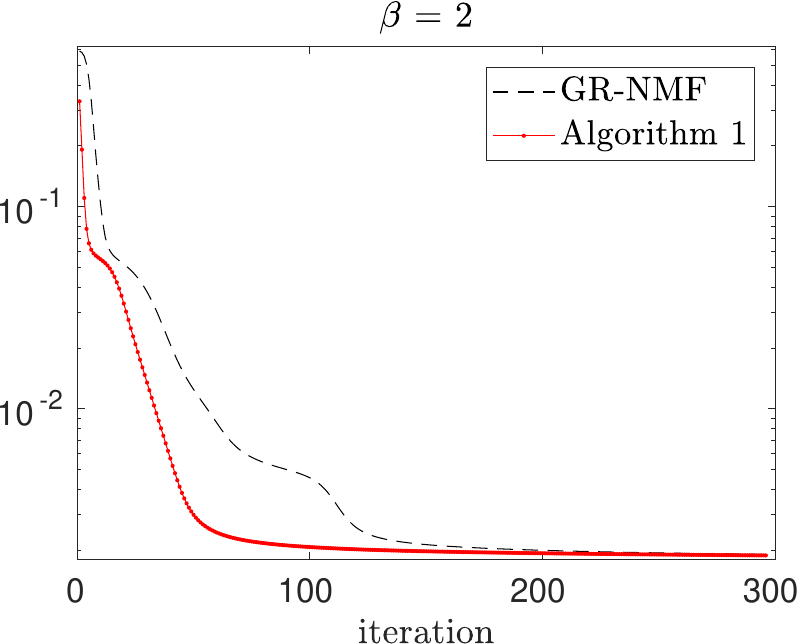}
        \includegraphics[width=\textwidth]{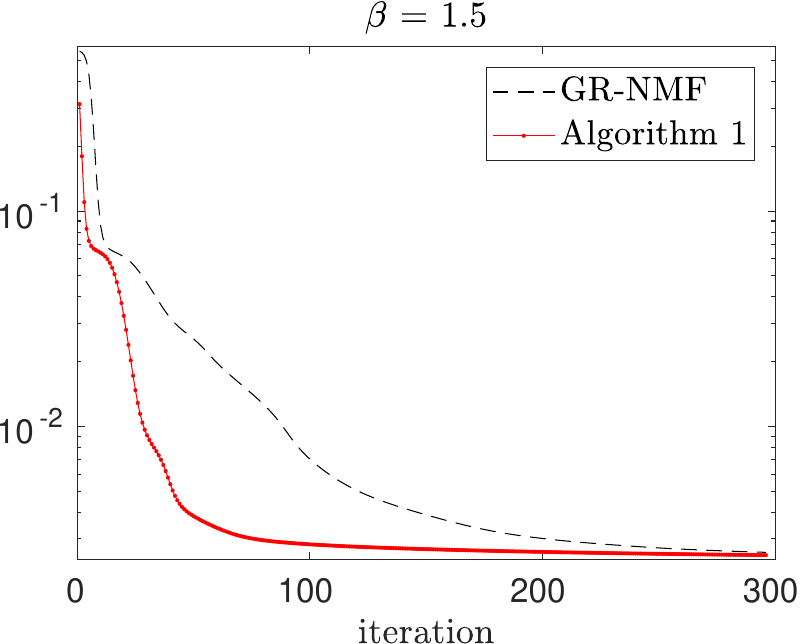}
        \includegraphics[width=\textwidth]{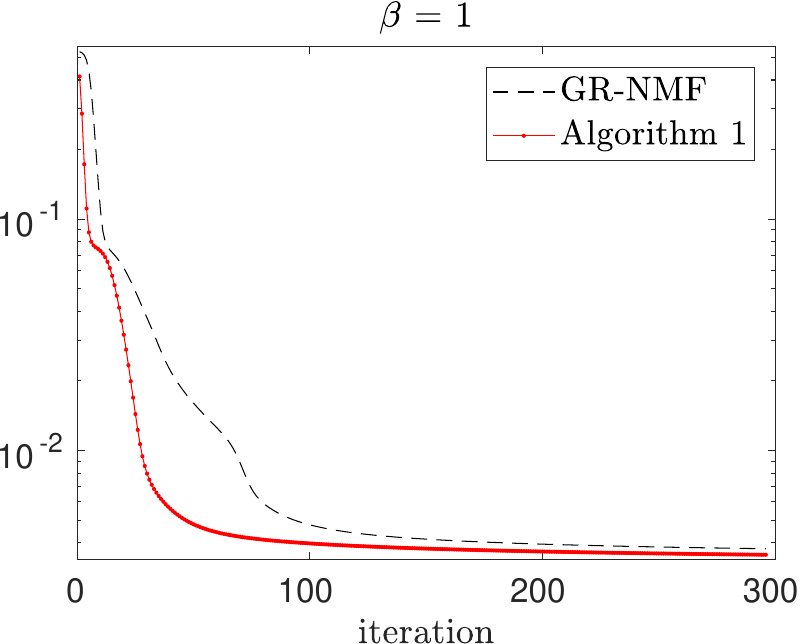}
        \includegraphics[width=\textwidth]{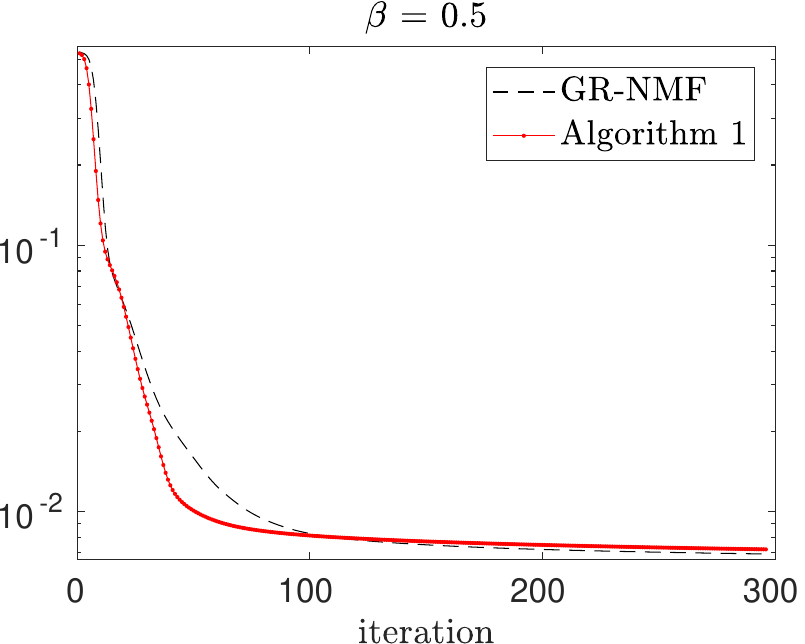}
        \includegraphics[width=\textwidth]{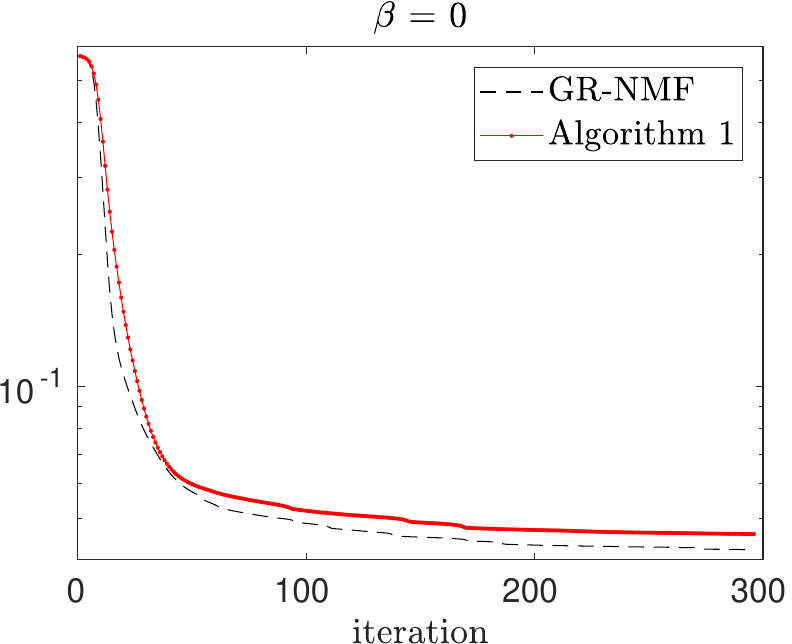}
        \caption{Samson Data set}
    \end{subfigure}
~ 
    \begin{subfigure}[b]{0.2815\textwidth}
        \includegraphics[width=\textwidth]{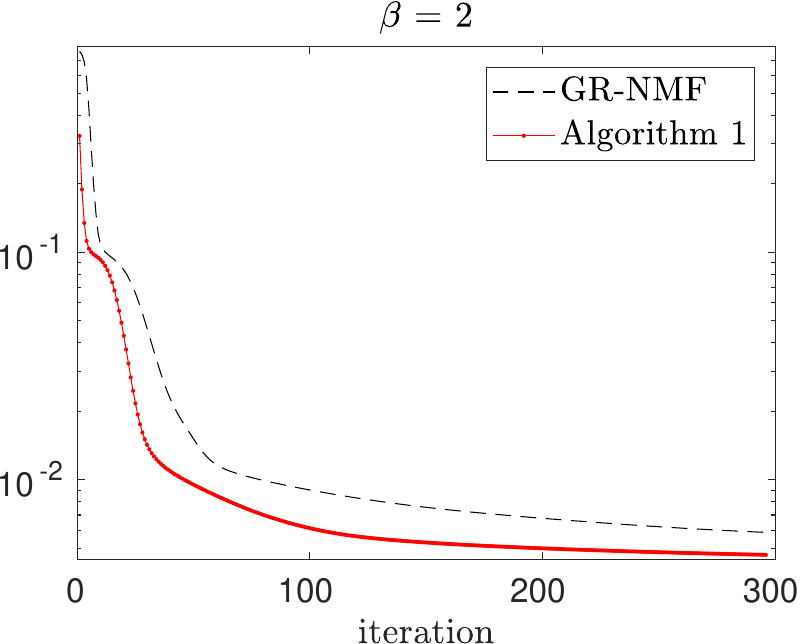}
        \includegraphics[width=\textwidth]{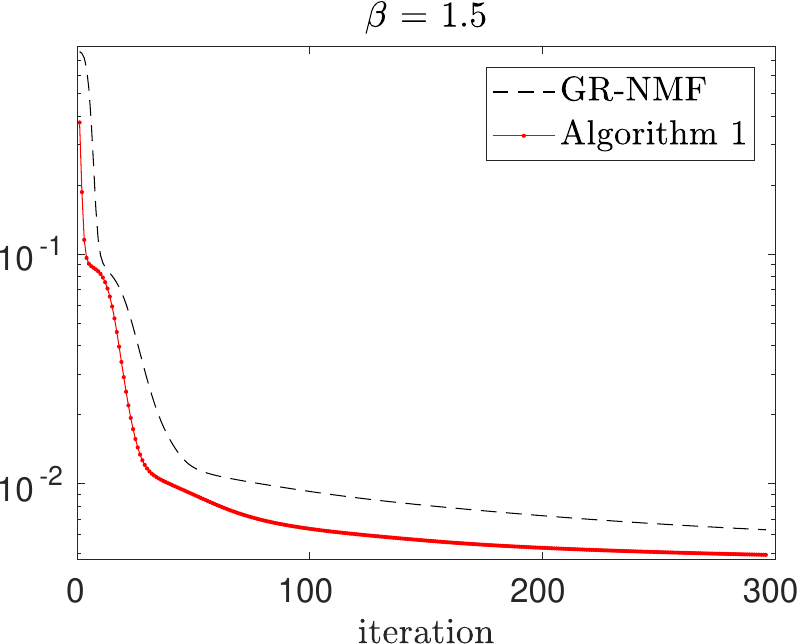}
        \includegraphics[width=\textwidth]{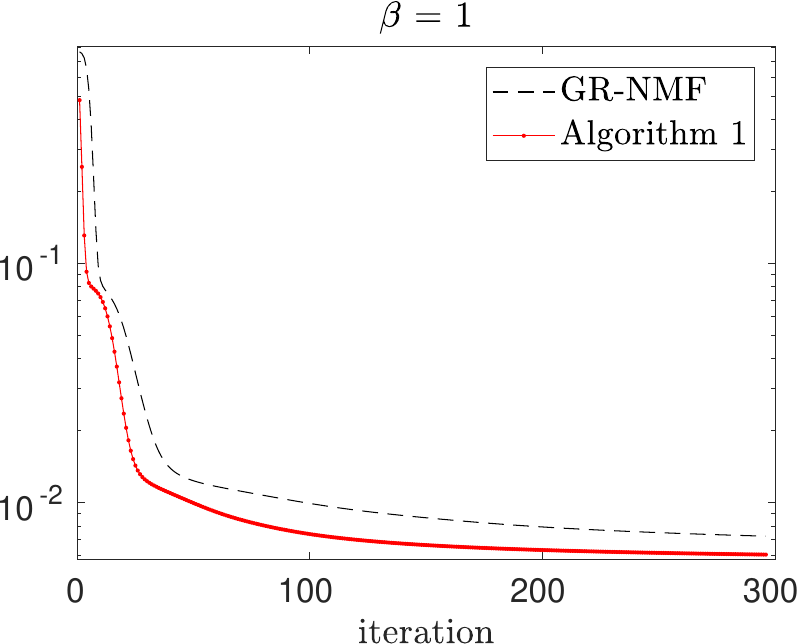}
        \includegraphics[width=\textwidth]{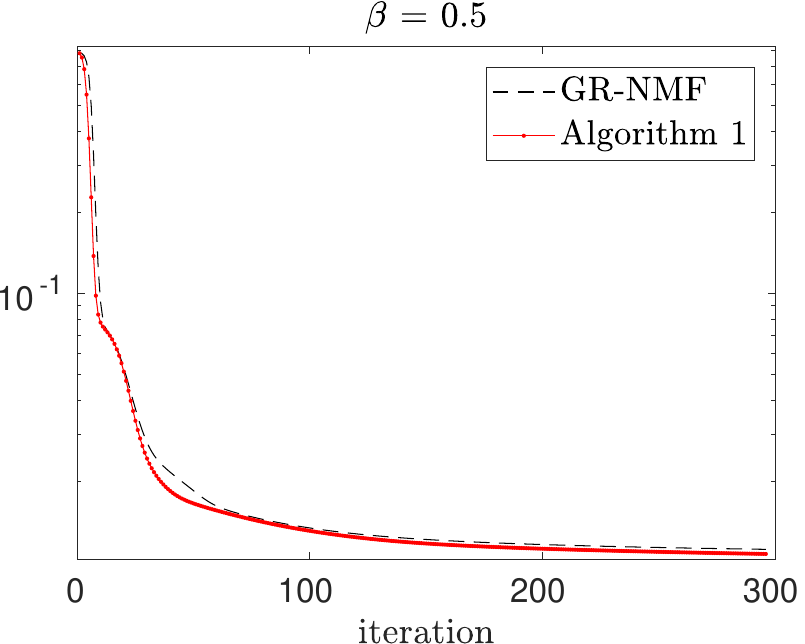}
        \includegraphics[width=\textwidth]{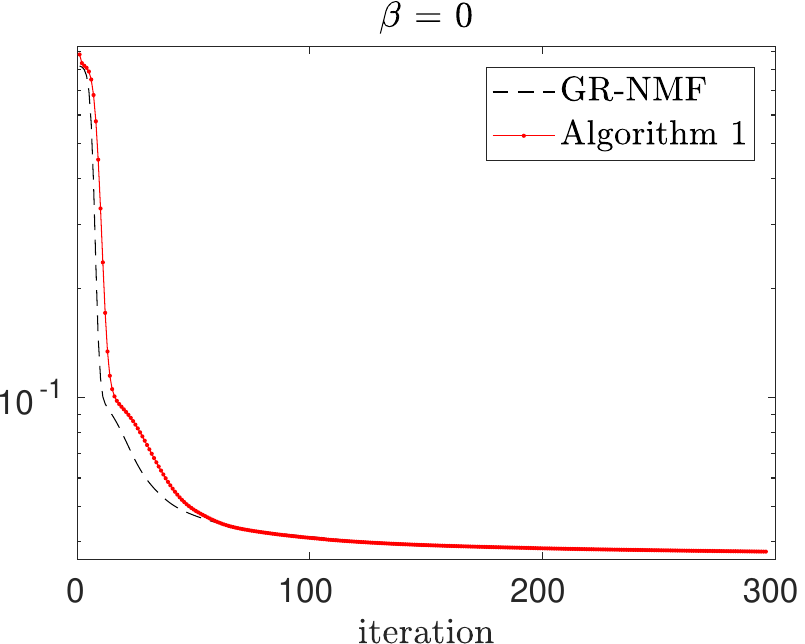}
        \caption{Jasper Ridge Data set}
    \end{subfigure}
    \begin{subfigure}[b]{0.2815\textwidth}
        \includegraphics[width=\textwidth]{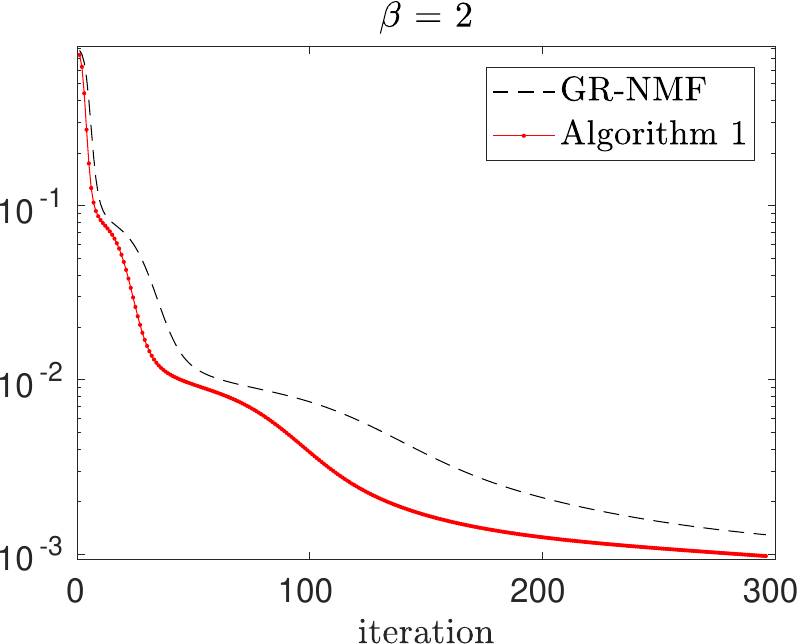}
        \includegraphics[width=\textwidth]{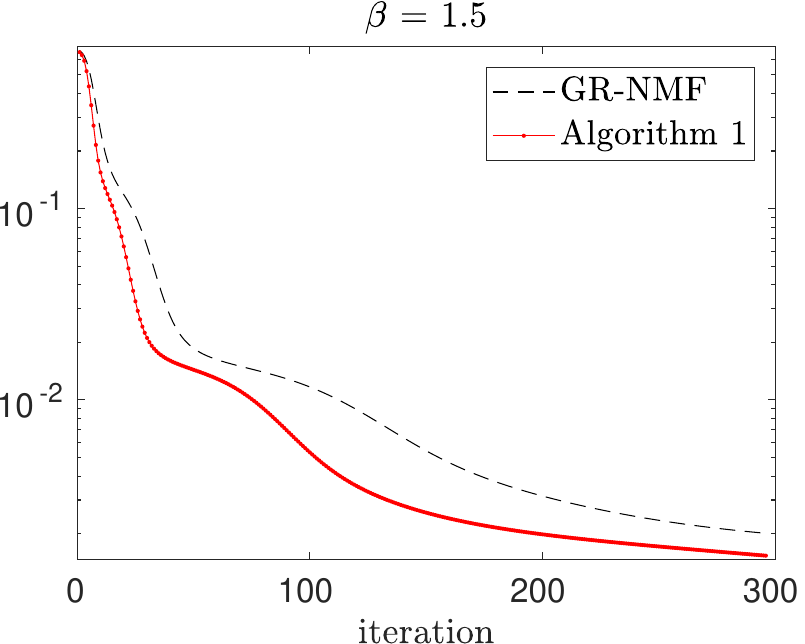}
        \includegraphics[width=\textwidth]{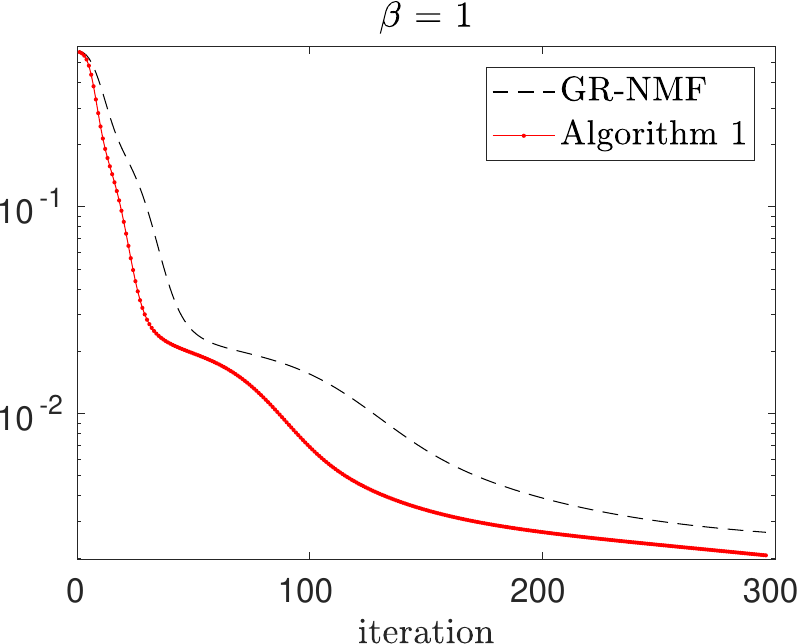}
        \includegraphics[width=\textwidth]{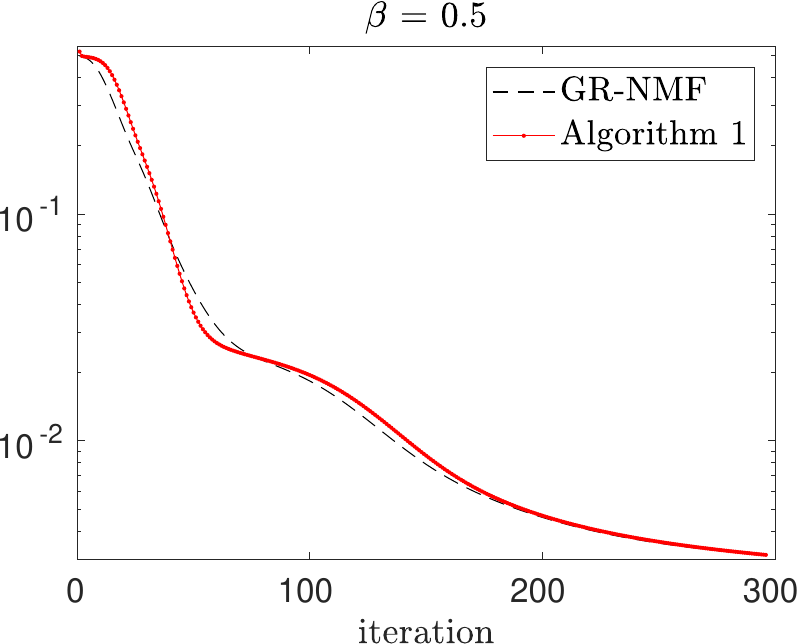}
        \includegraphics[width=\textwidth]{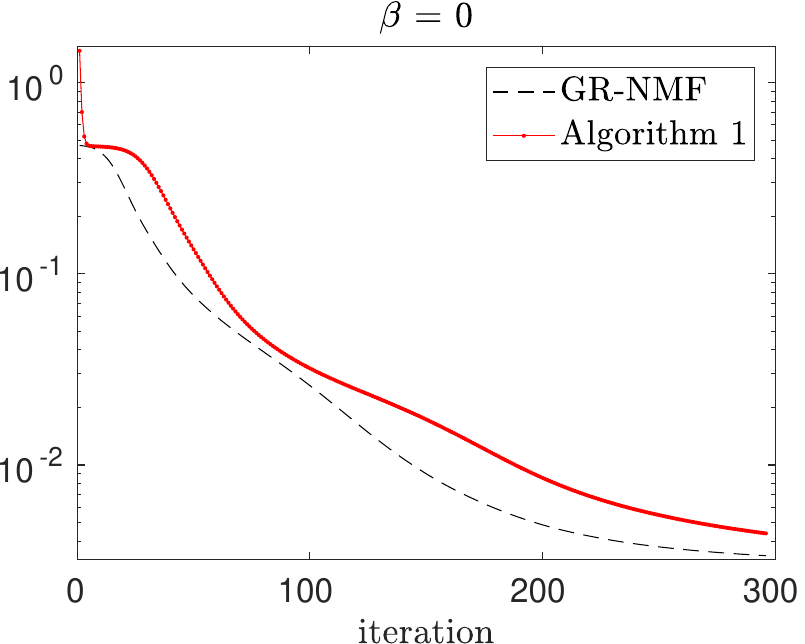}
        \caption{Cuprite Data set}
    \end{subfigure}
    \caption{Averaged {relative} objective function values over 20 random initializations obtained for Algorithm~\ref{consDisBetaNMF} (red line with circle markers) and the GR-NMF (black dashed line) applied to the three data sets detailed in the text for 300 iteration. The comparison is performed for different values of $\beta$, from top to bottom: $\beta=2$, $\beta=3/2$, and $\beta=1$. Logarithmic scale for y axis. 
}\label{fig:ssbetaNMF_lossfun_compa_part1}
\end{figure}

\subsection*{S2. Qualitative results obtained with Algorithm~\ref{dchsminvolKLNMF}} \label{appen_quali_res_algoKLsparse}

In the following we report qualitative results obtained with Algorithm~\ref{dchsminvolKLNMF} applied to three HS real data sets, that are Samson, Jasper and Urban data sets. The first two data sets are detailed in Section \ref{sec_ssbetaNMF}. The Urban data set contains 162 spectral bands with 307$\times$307 pixels with mostly six endmembers. 
Note that Cuprite data set is replaced by the Urban data set since endmembers for Cuprite correspond to chemical components which are more difficult to interpret visually while endmembers for Urban data sets are more easily interpretable.

As mentioned earlier, $\lambda_{k}$ enables to control sparsity of the $k$-th row of $H$. Given a row $H(k,:) \in \mathbb{R}_{+}^{N}$ of $H$, a meaningful way to measure its sparsity is to consider the following measure \cite{1055559}:
\begin{equation}\label{eq:sparlevel}
\begin{aligned}
& \text{sp}\left( H(k,:) \right)=\frac{\sqrt{N}-\frac{\left\| H(k,:) \right\|_{1}}{\left\| H(k,:) \right\|_{2}}}{\sqrt{N}-1} \in \left[0,1 \right].
\end{aligned}
\end{equation}

During the numerical experiments, we observed that Algorithm~\ref{dchsminvolKLNMF} gives better results when the initial values for $\lambdab$ are low and progressively increased.  
During a specified interval of iterations $\left[it_{\min}, it_{\max}\right]$, the sparsity of the current iterate is measured by using equation \eqref{eq:sparlevel}, and the entries of $\lambdab$ are dynamically updated (increased with a rate $\alpha >1$) to achieve a desired sparsity level $sp$. 
The dynamic update of the weight vector to reach the desired levels of sparsity has been activated in the iterations intervals $\left[1,150\right]$, $\left[1,150\right]$ and $\left[1,75\right]$ for Samson, Jasper and Urban,  respectively. We report here the abundance maps of each end-member for two levels of average target sparsity that are 0.25 and 0.5. For all the simulations, the weight vector $\lambdab$ has been initialized to $0.05\eb$, and the algorithm was run for 300~iterations. 

We fix the number of endmembers to 3, 4 and 6 respectively for Samson, Jasper Ridge and Urban data sets, these values are commonly considered in the HS community \cite{zhu2017hyperspectral}. 
Figures \ref{fig:abundances_Samson} to \ref{fig:abundances_Urban} picture the abundance estimation for the three data sets for the two levels of sparsity. 
\begin{figure}
    \centering  
    \begin{subfigure}[b]{\textwidth}
        \centering 
        \includegraphics[width=0.3\textwidth]{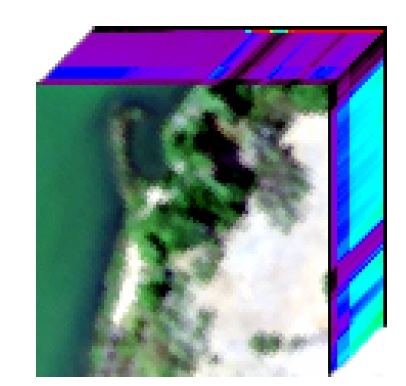}
        \caption{Samson Data set}
    \end{subfigure}\\
    \begin{subfigure}[b]{\textwidth}
        \centering 
        \fbox{\includegraphics[width=0.8\textwidth]{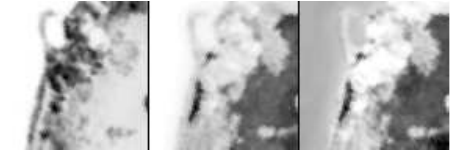}}
        \caption{
        Abundance map with average sparsity level set to 0.25}
    \end{subfigure}\\
    \begin{subfigure}[b]{\textwidth}
        \centering 
        \fbox{\includegraphics[width=0.8\textwidth]{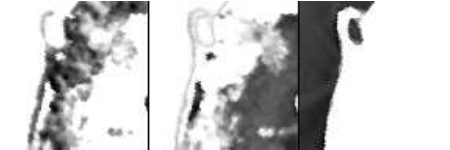}}
        \caption{Abundance map with average sparsity level set to 0.5}
    \end{subfigure}
    \caption{Samson data set (a) and results ((b) and (c)) for the Abundance maps estimated using Algorithm~\ref{dchsminvolKLNMF} for the three endmembers: $\sharp$1 Tree, $\sharp$2 Soil and $\sharp$3 Water. Two average sparsity levels considered: 0.25 (b) and 0.5 (c). 
    }\label{fig:abundances_Samson}
\end{figure}

\begin{figure}
    \centering  
    \begin{subfigure}[b]{\textwidth}
        \centering 
        \includegraphics[width=0.30\textwidth]{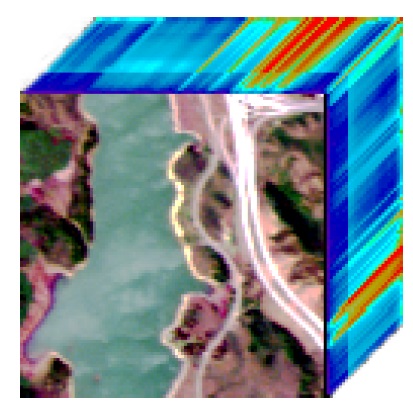}
        \caption{Jasper Ridge Data set}
    \end{subfigure}\\
    \begin{subfigure}[b]{\textwidth}
        \centering
        \includegraphics[width=\textwidth]{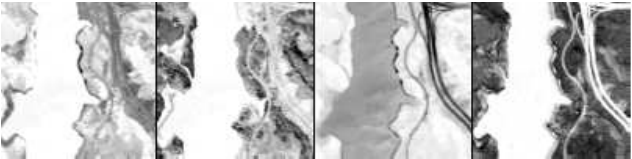}
        \caption{Abundance map with average sparsity level set to 0.25}
    \end{subfigure}\\
    \begin{subfigure}[b]{\textwidth}
        \centering
        \includegraphics[width=\textwidth]{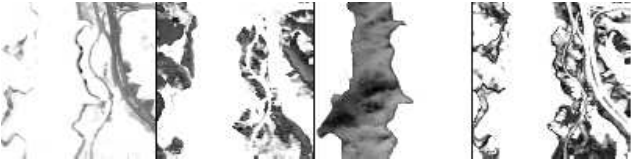}
        \caption{Abundance map with average sparsity level set to 0.5}
    \end{subfigure}
    \caption{Jasper Ridge data set (a) and results ((b) and (c)) for the Abundance maps estimated using Algorithm~\ref{dchsminvolKLNMF} for the four endmembers: $\sharp$1 Road, $\sharp$2 Tree,  $\sharp$3 Water and $\sharp$4 Soil. Two average sparsity levels are considered: 0.25 (b) and 0.5 (c).}\label{fig:abundances_Jasper}
\end{figure}

\begin{figure}
    \centering  
    \begin{subfigure}[b]{0.22\textwidth}
        \centering 
        \includegraphics[width=\textwidth]{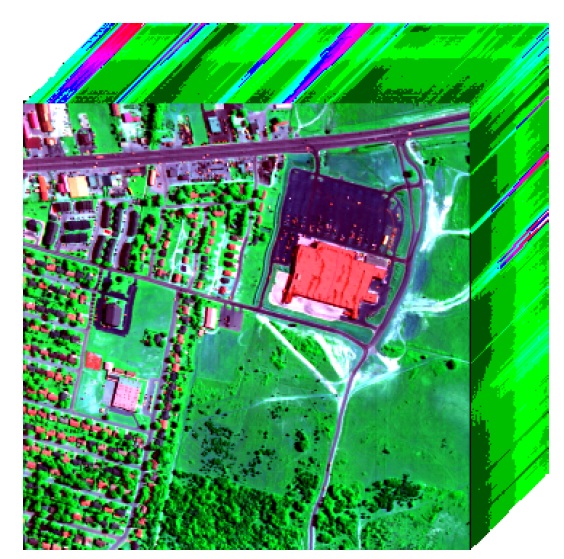}
        \caption{Urban Data set}
    \end{subfigure}\\
    \begin{subfigure}[b]{0.6\textwidth}
        \centering
        \includegraphics[width=\textwidth]{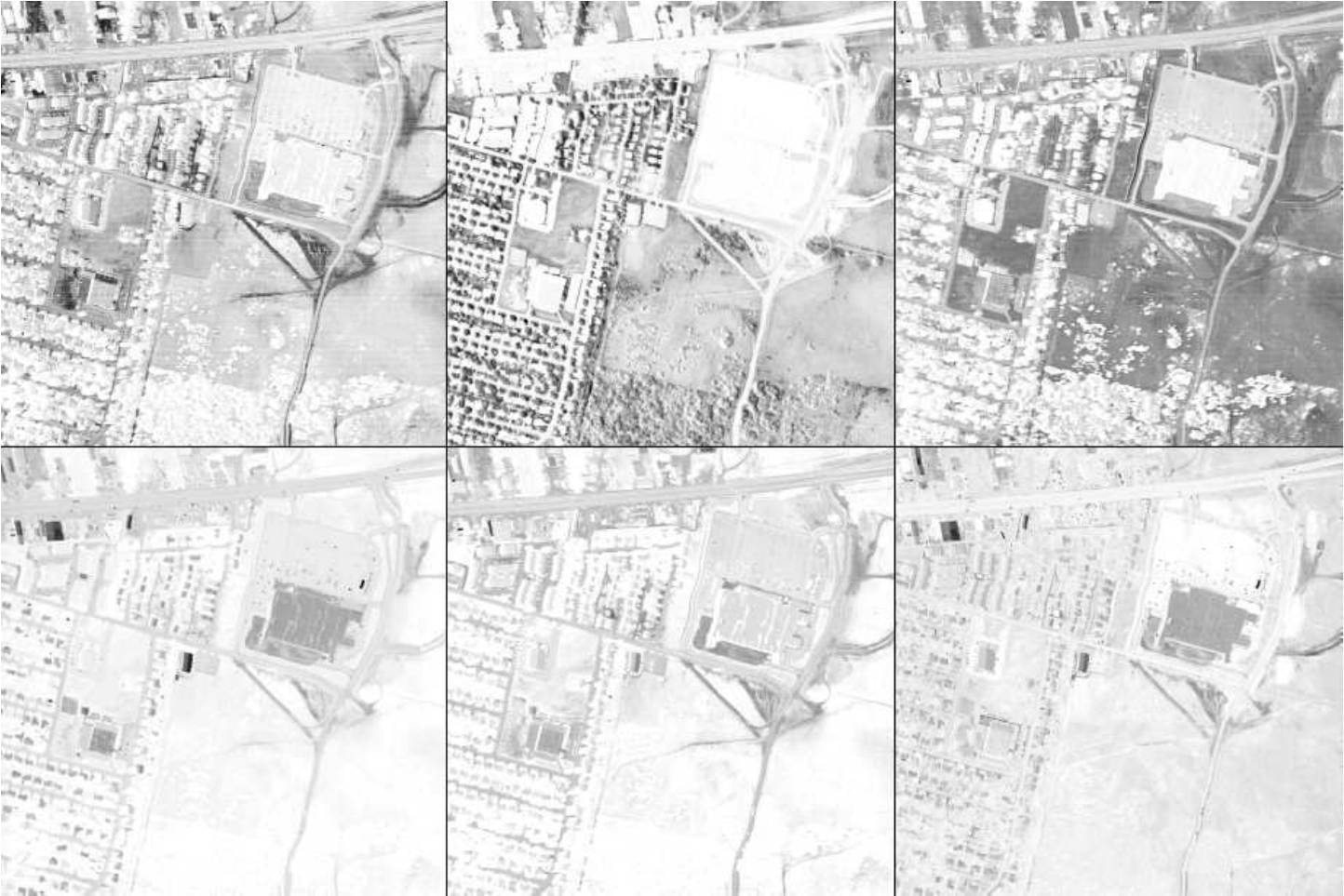}
        \caption{Abundance map with average sparsity level set to 0.25}
    \end{subfigure}\\
    \begin{subfigure}[b]{0.6\textwidth}
        \centering
        \includegraphics[width=\textwidth]{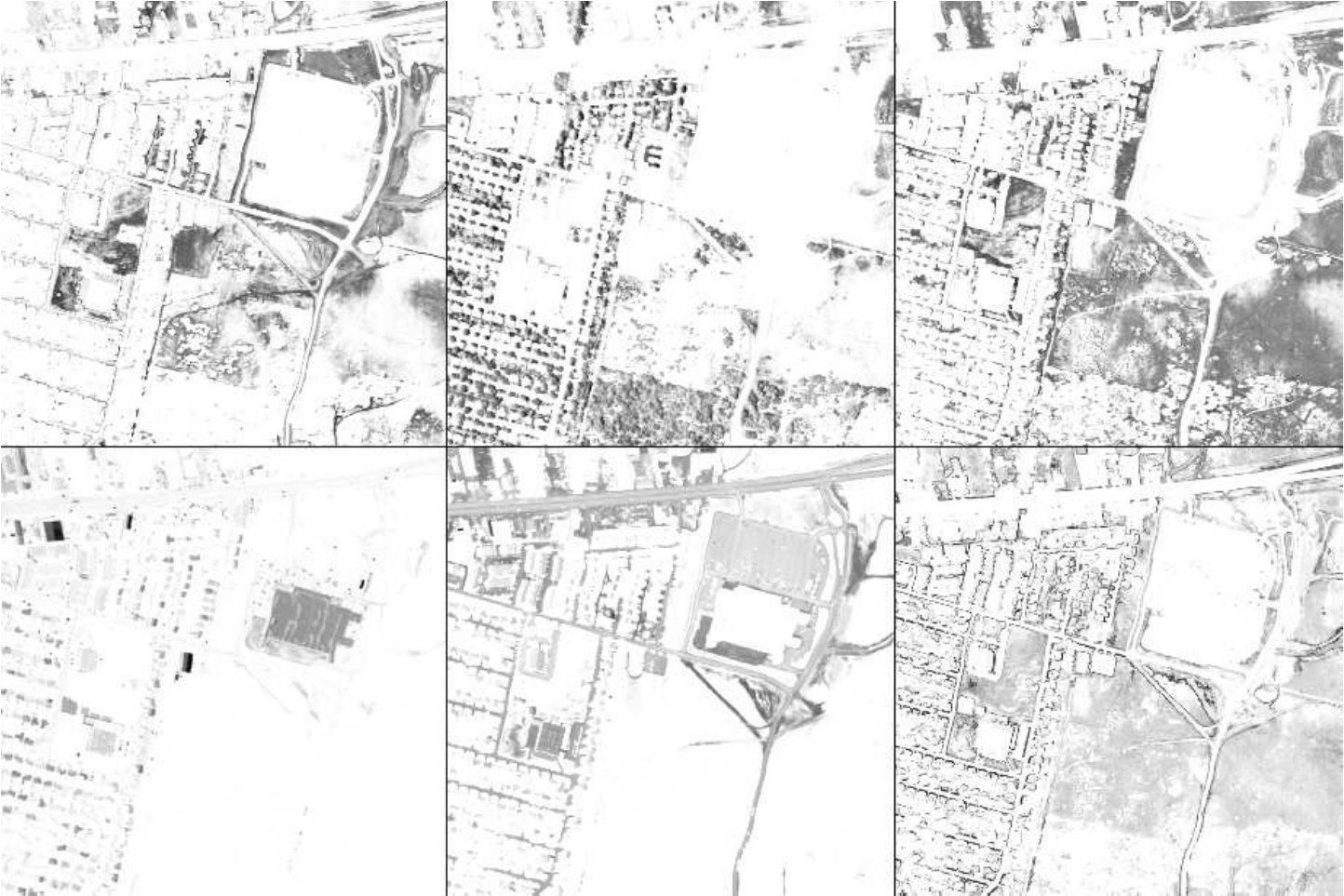}
        \caption{Abundance map with average sparsity level set to 0.5}
    \end{subfigure}
    \caption{Urban data set (a) and results ((b) and (c)) for the Abundance maps estimated using Algorithm~\ref{dchsminvolKLNMF} for the six endmembers: $\sharp$1 Soil, $\sharp$2 Tree,  $\sharp$3 Grass, $\sharp$4 Roof, $\sharp$5 Road/Asphalt and $\sharp$6 Roof2/shadows. Two average sparsity levels are considered: 0.25 (b) and 0.5 (c).
    }\label{fig:abundances_Urban}
\end{figure} 

\begin{figure}[h!]
      \centering
	  \includegraphics[width=0.8\linewidth]{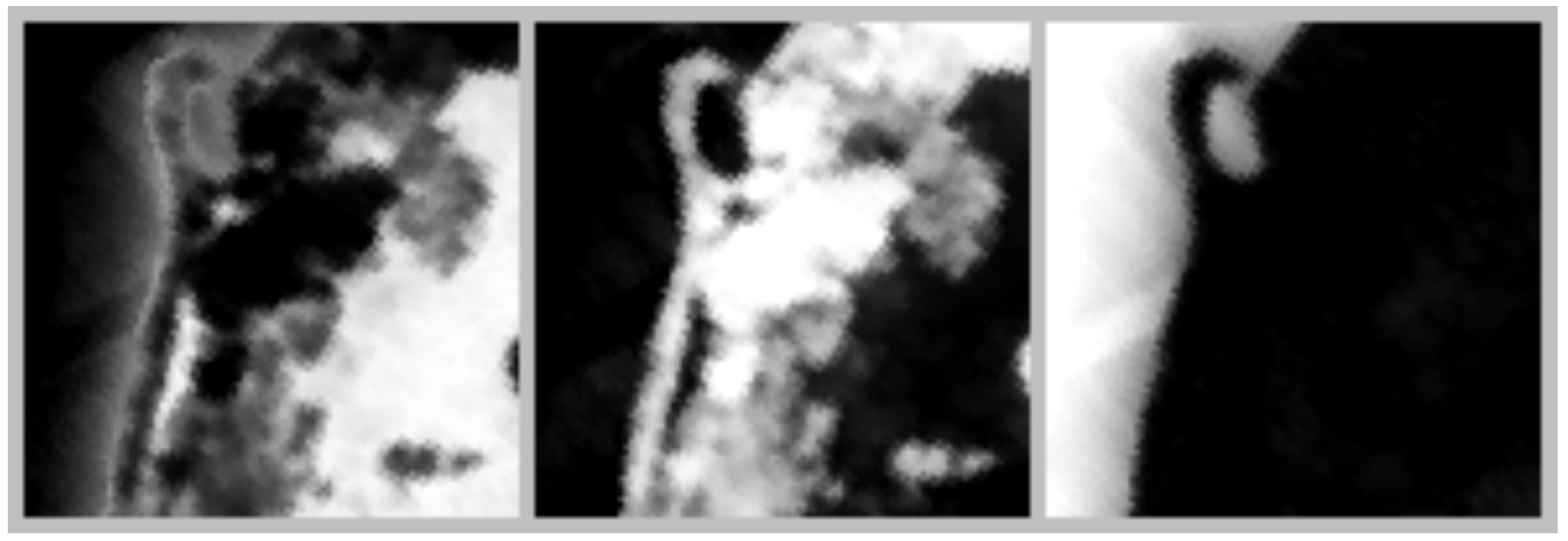}
	  \caption{Baseline abundances for the endmembers obtained for Samson data extracted from \cite{zhu2017hyperspectral}: $\sharp$1 Soil, $\sharp$2 Tree and $\sharp$3 Water. }\label{fig:abundances_Samson_GT}
\end{figure}

\begin{figure}[h!]
      \centering
	  \includegraphics[width=0.9\linewidth]{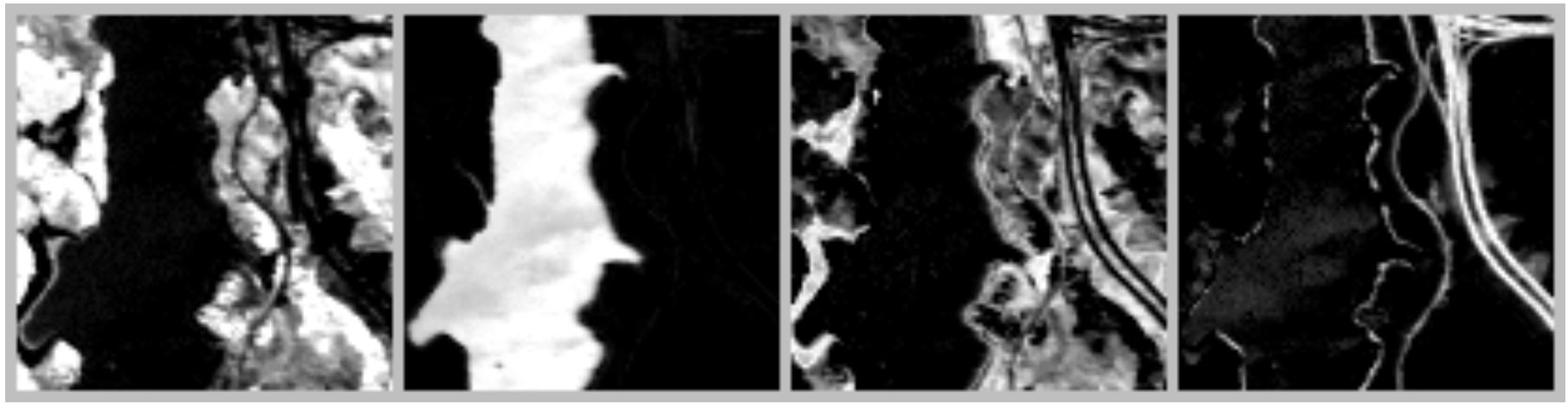}
	  \caption{Baseline abundances for the endmembers obtained for Jasper Ridge data extracted from \cite{zhu2017hyperspectral}: $\sharp$1 Road, $\sharp$2 Soil, $\sharp$3 Water and $\sharp$4 Tree. }\label{fig:abundances_Jasper_GT}
\end{figure}

\begin{figure}[h!]
      \centering
	  \includegraphics[width=0.6\linewidth]{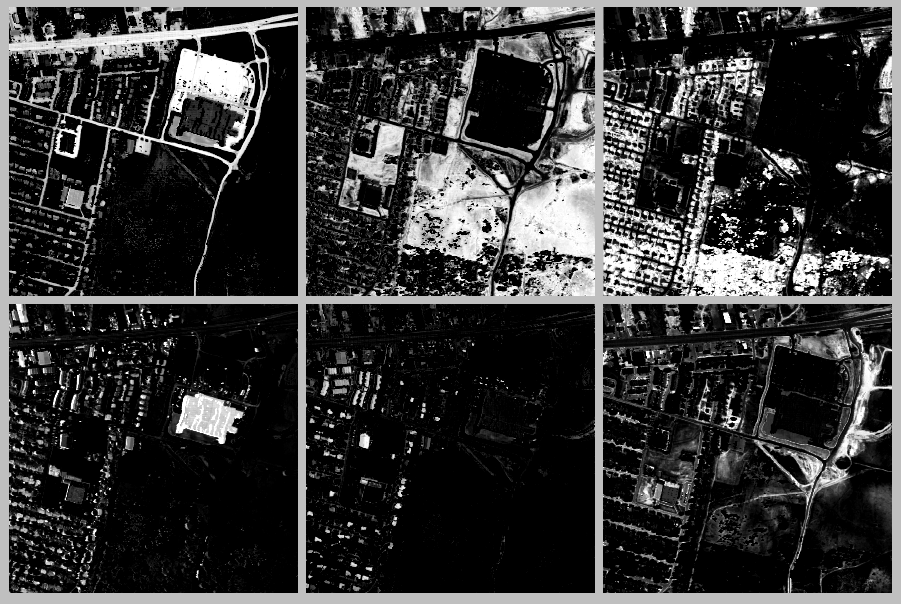}
	  \caption{Baseline abundances for the endmembers obtained for Urban data extracted from \cite{zhu2017hyperspectral}: $\sharp$1 Asphalt, $\sharp$2 Grass, $\sharp$3 Tree, $\sharp$4 Roof1, $\sharp$5 Roof2/Shadow and $\sharp$6 Soil.}\label{fig:abundances_Urban_GT}
\end{figure}

In order to validate the results obtained for the abundances of the endmembers, we display in Figures \ref{fig:abundances_Samson_GT}, \ref{fig:abundances_Jasper_GT} and \ref{fig:abundances_Urban_GT} the ground truth results obtained in \cite{zhu2017hyperspectral}. Note that the grayscale used in \cite{zhu2017hyperspectral} is the complementary of the one used in Figures \ref{fig:abundances_Samson} to \ref{fig:abundances_Urban}.

We observe that the abundance estimation gets significantly more accurate when the level of average sparsity is higher. For the Samson and Jasper Ridge data sets, the abundances for the endmembers are nicely estimated while five endmembers over six are well estimated for the Urban data set. 
The ``Roof" is divided into ``Roof1"  and  ``Roof2/shadow" \cite{5871318,zhu2017hyperspectral}. In our simulations, it seems that the sixth endmember corresponds to some shadows with a small residual of ``Grass", while the ``Roof" is not split into two groups.

\end{document}